\documentclass[11pt]{article}

\usepackage[final]{acl}

\usepackage{times}
\usepackage{latexsym}

\usepackage[T1]{fontenc}

\usepackage[utf8]{inputenc}

\usepackage{microtype}

\usepackage{inconsolata}

\usepackage{graphicx}

\usepackage{microtype}

\usepackage{inconsolata}
\usepackage{graphicx}
\usepackage{amsmath}
\usepackage{amssymb}
\usepackage{booktabs}
\usepackage{multirow}

\usepackage{algorithm}
\usepackage{algpseudocode}
\usepackage{tcolorbox}

\usepackage{amsmath}
\usepackage{amssymb}
\usepackage{booktabs,multirow}

\usepackage{enumitem}
\usepackage{adjustbox}

\usepackage{amsthm}
\newtheorem{lemma}{Lemma}
\newtheorem{theorem}{Theorem}
\usepackage{bbm} 

\usepackage{orcidlink}
\usepackage{inconsolata}
\usepackage{listings}
\usepackage{minted}

\usepackage{xcolor,colortbl}
\definecolor{fomulblue}{HTML}{EAF2FB}
\definecolor{fomuldeep}{HTML}{D7E8FA}
\definecolor{quantgray}{HTML}{F6F7F9}
\definecolor{softgray}{HTML}{F6F7F9}
\definecolor{warnred}{HTML}{FCECEC}
\definecolor{fomulgray}{HTML}{F4F6F8}
\definecolor{fomulgreen}{HTML}{EAF7EA}

\usepackage{silence}
\title{Forgetting Only What Matters: Layer-Selective Unlearning toward Robust LLMs}

\author{Ravi Ranjan\,\orcidlink{0009-0004-5790-3179} \thanks{Corresponding author.\\ Published as a conference paper at AACL-IJCNLP 2026.}\\
  Florida International\\
  University (FIU), \\
  Miami, FL, USA \\
  \href{mailto:rkuma031@fiu.edu}
  {rkuma031@fiu.edu}\\\And
  Olivera Kotevska\,\orcidlink{0000-0003-1677-2243}\\
  Oak Ridge National\\
  Laboratory (ORNL), \\
  Oak Ridge, TN, USA \\
  \href{mailto:kotevskao@ornl.gov}{kotevskao@ornl.gov}\\\And
  Agoritsa Polyzou\,\orcidlink{0000-0001-8630-7131}\\
  Florida International\\
  University (FIU), \\
  Miami, FL, USA \\
  \href{mailto:apolyzou@fiu.edu}{apolyzou@fiu.edu} }

\begin{document}
\maketitle

\vspace{-0.3cm}
\begin{abstract}
\vspace{-0.2cm}
Large Language Models (LLMs) can memorize and reproduce sensitive, copyrighted, or otherwise undesirable training content, creating privacy, safety, and regulatory concerns. Machine unlearning offers a practical alternative to full retraining, but many existing methods apply broad or fixed parameter updates that can degrade utility and remain brittle under deployment changes such as post-training quantization, where forgotten knowledge may partially re-emerge. 
We propose \emph{\textbf{F}orgetting \textbf{O}nly What \textbf{M}atters via \textbf{U}nlearning \textbf{L}ayers (\textbf{FOM-UL})}, a layer-level unlearning framework that selects transformer layers using a forget-to-retain significance score. This score identifies layers with high influence on the forget set and low sensitivity to the retain set, allowing FOM-UL to concentrate updates where they are most effective while leaving most of the model unchanged. This targeted update strategy improves the forgetting-utility trade-off and provides an empirical path toward quantization-resilient unlearning by reducing the chance that small, diffuse updates are erased by low-bit rounding. 
Across TOFU, KnowUnDo, and MUSE-style evaluations, FOM-UL reduces residual memorization compared with strong GA, NPO, KLD, SURE, ReLearn, and LUNAR-based baselines while preserving retain-set utility close to the vanilla model. Under 8-bit and 4-bit post-training quantization, FOM-UL maintains stronger memorization suppression and utility preservation than competing methods, and adversarial prompt evaluations show lower recovery of forgotten content. Overall, FOM-UL provides an efficient unlearning strategy that improves targeted forgetting, utility preservation, and deployment robustness without claiming formal guarantees of erasure.\\
Code available at:\\ \href{https://github.com/raviranjan-ai/FOMUL-AACL-2026}{https://github.com/raviranjan-ai/FOMUL-AACL-2026}.
\end{abstract}

\vspace{-0.3cm}
\section{Introduction}
\vspace{-0.4cm}
\begin{figure}[t!]
  \centering
  \includegraphics[width=\linewidth]{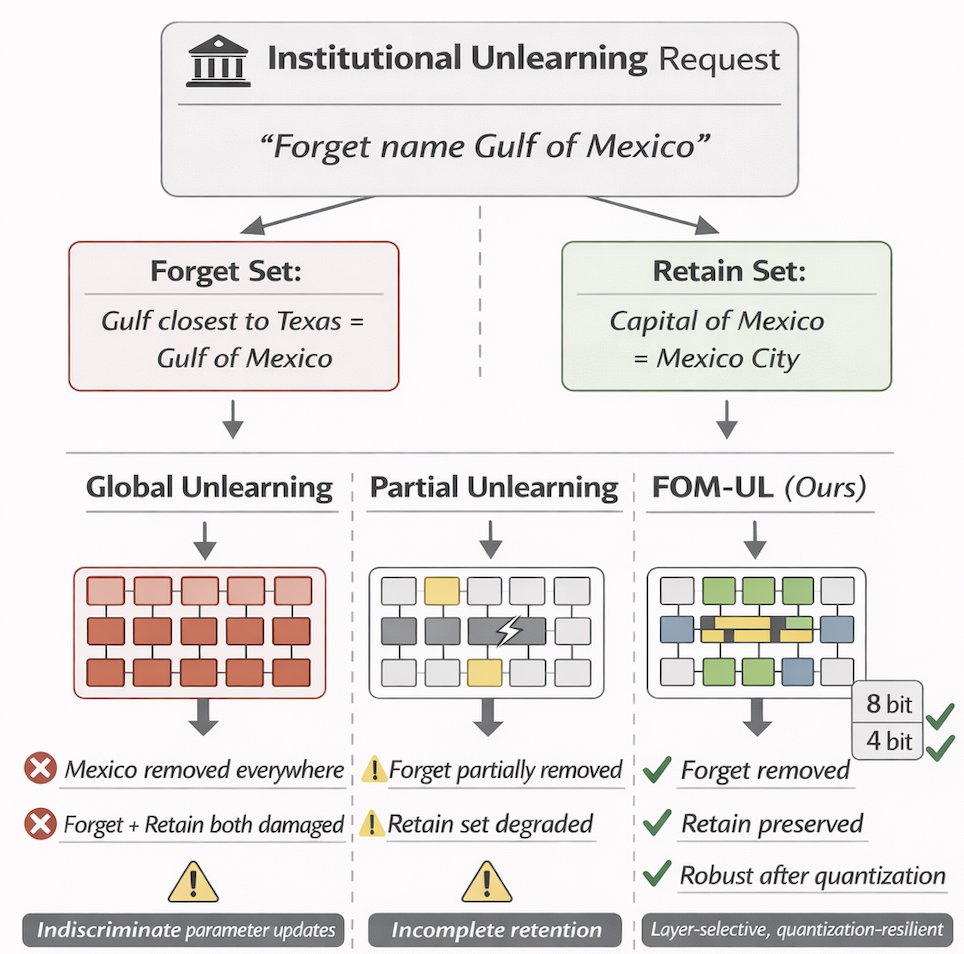}
  \caption{FOM-UL robust and quantization-resilient forgetting against global and partial unlearning methods.}
  \label{fig:intro}
\vspace{-0.6cm}
\end{figure}

Large Language Models (LLMs) have transformed natural language processing, delivering strong performance across diverse tasks and domains~\cite{zhao2023survey}. Yet, as these models scale and are deployed widely, an increasingly visible failure mode is their tendency to memorize and reproduce fragments of training data, including sensitive personal information, copyrighted text, or otherwise undesirable content~\cite{zhao2023survey,ahmed2026extracting}. Such memorization creates legal, ethical, and security risks, especially in high-stakes settings where accidental disclosure of protected content is unacceptable. These concerns are further amplified by regulatory requirements such as the General Data Protection Regulation (GDPR) ``right to be forgotten''~\cite{council2022general}, which demands mechanisms to remove specific data upon request.

Machine unlearning for LLMs has therefore emerged as a practical alternative to full retraining: the goal is to remove targeted knowledge or behaviors from a trained model while preserving its overall capabilities. However, existing unlearning pipelines face major obstacles. Retraining is often prohibitively expensive at LLM scale~\cite{jang2022knowledge}, and frequent unlearning requests in real-world deployments further stress the need for efficient, deployable solutions. Beyond cost, the high dimensionality and tightly coupled representations of modern transformer architectures make targeted knowledge removal inherently difficult: even seemingly localized edits can be propagated broadly, causing utility degradation or catastrophic forgetting~\cite{zhang2024negative}. These challenges have motivated a rapidly growing literature on LLM unlearning across widely used architectures such as GPT-2, LLaMA, and Gemma~\cite{geng2025comprehensive,yao2024machine}.

A central issue is that many unlearning methods still rely on global or otherwise indiscriminate parameter updates, which can be brittle and imprecise. Figure~\ref{fig:intro} contrasts this paradigm with our proposed approach: rather than updating large portions of the model, \textbf{Forgetting Only What Matters via Unlearning Layers (FOM-UL)} selectively modifies only those transformer layers most responsible (high forget-to-retain ratio) for encoding the undesired knowledge. This targeted intervention aims to improve approximate forgetting while preserving general capabilities and minimizing collateral damage.

\noindent \textit{Targeted unlearning} differs from general-purpose model editing by focusing on the selective removal of \emph{specific} facts, documents, or behaviors, enabling an LLM to ``forget'' designated content without retraining from scratch. Importantly, unlearning is not standard fine-tuning in reverse: whereas fine-tuning typically reinforces desired behavior via positive examples, unlearning must suppress undesired behavior.

A range of approaches have been proposed to realize this objective~\cite{geng2025comprehensive}. In particular, preference-optimization (PO) and gradient-ascent (GA) style methods are widely adopted due to their simplicity and effectiveness~\cite{liu2024large}. PO-based methods cast unlearning as an alignment problem in which the model is steered to prefer alternative responses using preference pairs.
Complementary directions, including relabeling, adapter/LoRA-based updates, quantization-based techniques~\cite{zhang2024does}, and reinforcement learning~\cite{lu2022quark}-provide additional tools, but PO and GA remain central to scalable unlearning.

\noindent Gradient Ascent (GA) is a common unlearning strategy that increases the loss on memorized responses to reduce confidence in targeted outputs~\cite{yao2024large}. However, GA typically applies broad model-wide updates, which can leave residual memorization and harm unrelated capabilities~\cite{yao2024machine}, and it is often brittle under post-training quantization, where discretizations may partially restore suppressed behaviors. Although retain-set regularization~\cite{liu2024large} and KL-divergence constraints to the original model~\cite{yao2024machine} help preserve utility, they do not fully eliminate the side effects of indiscriminate updates.
Motivated by recent evidence of quantization-induced failure modes in unlearning~\cite{zhang2024catastrophic}, we instead pursue a layer-selective strategy that concentrates updates where they matter most, improving precision and robustness against quantization-driven recovery.

\vspace{0.2cm}
\noindent \textbf{FOM-UL} is a targeted and efficient unlearning framework that mitigates catastrophic forgetting, quantization-induced relearning, and the inefficiency of global parameter updates.
Our primary contributions are:

\textbf{(i) Layer-level forget-retain localization.}
We introduce a layer-selection criterion based on the ratio between forget-set gradient magnitude and retain-set gradient magnitude. This identifies layers that offer high forgetting leverage with comparatively low retain-set interference.

\textbf{(ii) Iterative layer-budget expansion.}
Rather than updating a fixed region or the full model, FOM-UL starts from a small high-score layer set and expands it only when forgetting criteria are unmet, improving the trade-off between erasure strength and utility preservation.

\textbf{(iii) Efficient selective optimization.}
By restricting updates to a small number of selected layers, FOM-UL reduces trainable parameters, memory use, and runtime while remaining compatible with standard GA, NPO, and KLD-style unlearning losses.

\textbf{(iv) Empirical quantization-robustness analysis.}
Motivated by quantization-induced recovery, we test FOM-UL under 8-bit and 4-bit post-training quantization and show that concentrated layer updates reduce residual memorization compared with global or fixed-selection baselines.

\section{Preliminary and Related Work}
\label{sec:related-work}
\vspace{-0.2cm}
\textbf{Machine unlearning} in Large Language Models (LLMs) involves selectively removing specific learned knowledge without significantly degrading overall model performance \cite{geng2025comprehensive,liu2025rethinking,jang2022knowledge,huang2024offset}. Formally, given a model $f_\theta$ parameterized by $\theta$ and a dataset $D_{\text{forget}}$ representing undesirable knowledge, the parameter updates of machine unlearning can be expressed as:
\begin{equation}
\theta' = \theta + \eta \nabla_{\theta} \mathcal{L}_{\text{forget}}(\theta, D_{\text{forget}}),
\end{equation}
where $\eta$ is the learning rate and $\mathcal{L}_{\text{forget}}$ is typically a loss function defined on the forget set, often optimized via gradient ascent (GA) \cite{bourtoule2021machine,golatkar2020forgetting}. 

\noindent \textbf{Recent advances} in LLM unlearning have established several effective approaches to remove specific knowledge or capabilities from trained models. Parameter-based methods modify model weights directly through techniques like gradient ascent \cite{jang2022knowledge} or localized weight editing \cite{ilharco2022editing}. \textit{Global unlearning} updates the full parameter space of an LLM, which can aggressively suppress the targeted behavior but often propagates changes widely, leading to broader utility degradation and higher computational cost \cite{wuerkaixi2025adaptive}. In contrast, \textit{partial unlearning} restricts updates to a subset of components (e.g., layers, modules, or adapters) to limit collateral damage and improve efficiency \cite{wang2025llm}, yet it can suffer from \emph{incomplete forgetting} when the targeted knowledge is distributed across multiple parts of the network~\cite{yao2024machine}.
Knowledge-boundary methods create negative examples to teach models to avoid certain responses \cite{li2025knowledge, liu2024dellma}. Contrastive unlearning pairs forgetting targets with similar but acceptable content to refine decision boundaries \cite{chen2023unlearn}. Dataset-filtering approaches reconstruct training data while excluding unwanted information \cite{zhao2025survey}. 
Optimization-based methods, like Influence Tuning, use importance scores to identify and modify critical parameters \cite{xu2024editing}. Direct Preference Optimization (DPO) is a prominent example that assigns higher preference to neutral or refusal outputs than to the original memorized responses~\cite{rafailov2023direct}. Negative Preference Optimization (NPO) further streamlines this process by relying only on negative forget samples~\cite{zhang2024negative}.
These approaches vary in their effectiveness, computational requirements, and ability to preserve model performance on unrelated tasks.

\noindent However, many unlearning methods remain vulnerable to \emph{quantization-induced relearning}, where low-bit quantization can effectively erase small unlearning updates and restore behavior close to the original model. Post-training quantization is widely used to reduce the computational and storage overhead of LLMs by mapping full-precision parameters to low-bit representations~\cite{gholami2022survey,lin2024awq}.
Recent quantization methods include 8-bit and 4-bit techniques, significantly enhancing inference efficiency while preserving acceptable accuracy levels \cite{dettmers2022gpt3}. Nevertheless, quantization can adversely affect machine unlearning performance, potentially exacerbating the issue of catastrophic forgetting \cite{luo2023empirical, ranjan2026razor}. 

\noindent Formally, let $\theta$ be the original parameters and $\theta'$ the post-unlearning parameters, and let $Q(\cdot)$ denote post-training rounding quantization (elementwise or per-group). For step size $\Delta_j$ on coordinate/group $j$,
\begin{equation}
Q_{\Delta_j}(w)=\Delta_j\,\mathrm{Round}\!\left(\frac{w}{\Delta_j}\right).
\end{equation}

Quantization can mask an unlearning update when the pre- and
post-unlearning parameters remain in the same quantization bin.
For coordinate (or group) $j$, this occurs exactly when
\begin{equation}
\begin{split}
Q_{\Delta_j}(\theta'_j)
=
Q_{\Delta_j}(\theta_j)
\iff \\
\operatorname{Round}\!\left(\frac{\theta'_j}{\Delta_j}\right)
=
\operatorname{Round}\!\left(\frac{\theta_j}{\Delta_j}\right).
\label{eq:quant-bin-invariance}
\end{split}
\end{equation}
If this condition holds for a large fraction of the edited
coordinates, the quantized unlearned model can become
functionally closer to the quantized original model, allowing
part of the suppressed knowledge to re-emerge
\citep{zhang2024catastrophic}.

\noindent Another prominent line of work leverages parameter-efficient modules, e.g., adapters and LoRA layers, to localize updates and preserve the bulk of pre-trained weights~\cite{liu2025lune,li2021prefix,hu2022lora}. Moreover, even after ``erasure'', residual information can be resurrected by adversarial or carefully engineered prompts, so-called spillage, undermining any privacy guarantees~\cite{ji2024aligner, zhang2024catastrophic}.  

\noindent In summary, while existing methods, such as quantization-based unlearning, knowledge editing, and parameter-efficient tuning, have provided valuable insights into model modification, they exhibit significant limitations when applied to robust unlearning scenarios.

\section{Proposed Method: FOM-UL}
\label{sec:method}

We propose FOM-UL, a targeted framework that suppresses specific knowledge by updating a small set of layers with high forget-to-retain significance. Rather than claiming exact erasure, FOM-UL aims to approximate retraining behavior on the forget set while preserving retain-set utility and improving robustness to quantization-induced recovery.

\textbf{Problem Formulation.} Let $\mathcal{M}_\theta$ denote a pre-trained transformer-based model with parameters $\theta \in \mathbb{R}^d$. Given two datasets a forget set $D_{\mathrm{forget}}$, containing the data to be erased, and a retain set $D_{\mathrm{retain}}$, comprising knowledge that must be preserved the objective is to update the model parameters to minimize verbatim memorization (VerMem) and privacy leakage (PrivLeak) on $D_{\mathrm{forget}}$, while preserving utility and knowledge retention on $D_{\mathrm{retain}}$.

\begin{figure*}[t]
  \centering
  \includegraphics[width=\linewidth]{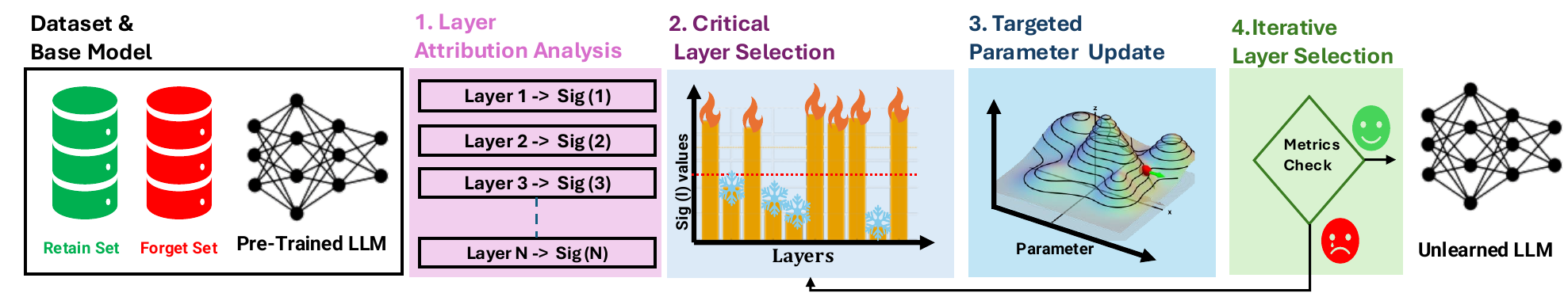}
  \caption{Overview of the Forgetting Only What Matters via Unlearning Layers (FOM-UL) framework.}
  \label{fig:method-main}
\vspace{-0.4cm}
\end{figure*}

\textbf{Motivation.}
Modern LLMs, such as Llama-2, are typically \emph{decoder-only} autoregressive transformers with $L$ stacked layers. For an input prefix $x_{1:t}$, each layer $\ell$ updates hidden states $h^{(\ell)}_{1:t}$ via (i) \emph{multi-head self-attention} (MHSA) and (ii) a position-wise \emph{feed-forward network} (FFN), coupled with residual connections and normalization. In MHSA, each \emph{attention head} performs a query-key-value interaction to form a weighted mixture of contextual token representations, and the layer aggregates diverse dependency patterns (e.g., local syntax and long-range factual cues) across heads.
Importantly, information is \emph{not uniformly distributed} across the network: lower layers tend to encode lexical/syntactic features, intermediate layers increasingly represent semantic relations, and deeper layers are more directly coupled to the final \emph{logits} (next-token scores) used for generation. Consequently, memorized or sensitive content can be disproportionately concentrated in a subset of layers/heads that exert outsized influence on specific next-token predictions. This heterogeneous localization motivates our work. By attributing a forget objective to the layers that most affect the target prediction and selectively updating only those layers, FOM-UL can induce targeted forgetting with reduced collateral utility loss, while producing parameter shifts that are more resilient to quantization-induced reversal than diffuse, global updates.

\textbf{Proposed Method Overview.} Figure~\ref{fig:method-main} presents the complete FOM-UL pipeline. First, we select the datasets, a \emph{Forget} set (e.g., copyrighted Harry Potter text) and a \emph{Retain} set (e.g., fandom wiki and general knowledge). Next, a layer‐attribution analysis identifies the transformer layers most responsible for encoding the sensitive content. A binary saliency mask is then generated (for $n$ layers) to \emph{freeze} layers with low attribution and \emph{unlock} only the high‐attribution layers for updates. Targeted unlearning alternates between (i) gradient ascent on the Forget set to maximize loss on unwanted content and (ii) gradient descent on the Retain set to preserve essential knowledge, applied solely within the selected layers. Finally, the resulting model is evaluated on verbatim memorization, privacy leakage, knowledge retention, and general utility to verify robust, quantization‐resilient forgetting.

\textbf{Identifying Key Layers.} We consider a transformer-based LLM with parameters $\theta=\{\theta^{(1)},\dots,\theta^{(L)}\}$ grouped by layer, and let $x_{1:t}$ denote an input prefix. We define $w^{*}_{t+1}=\arg\max_{w} P_{\theta}(w\mid x_{1:t})$ as the model’s top-1 next-token prediction and compute its baseline probability
\begin{equation}
\hat{p}\;=\;P_{\theta}\!\left(w^{*}_{t+1}\mid x_{1:t}\right).
\end{equation}
To localize where this prediction is formed, we perform \emph{layer-wise ablation}: for each layer $\ell\in\{1,\dots,L\}$, we intervene on the forward pass by removing (e.g., zeroing) the contribution of layer $\ell$ to the residual stream, and recompute the same next-token probability under this ablated model, denoted by $P_{\theta\setminus \ell}$:
\begin{equation}
\hat{p}_{\ell}\;=\;P_{\theta\setminus \ell}\!\left(w^{*}_{t+1}\mid x_{1:t}\right),
\qquad
\end{equation}

To prioritize layers that enable effective forgetting \emph{with minimal retain-set disruption}, FOM-UL further computes a gradient-based \emph{significance score}. Let $\mathcal{L}_{\text{forget}}(\theta;B_f)$ and $\mathcal{L}_{\text{retain}}(\theta;B_r)$ be the losses on forget and retain batches $B_f\subset\mathcal{D}_{f}$ and $B_r\subset\mathcal{D}_{r}$, respectively. We define the per-layer gradient magnitudes
\begin{equation}
I(\ell)\;=\;\left\|\nabla_{\theta^{(\ell)}}\,\mathcal{L}_{\text{forget}}(\theta;B_f)\right\|_{2},
\end{equation}
\begin{equation}
I_r(\ell)\;=\;\left\|\nabla_{\theta^{(\ell)}}\,\mathcal{L}_{\text{retain}}(\theta;B_r)\right\|_{2},
\end{equation}
and the normalized forget-to-retain trade-off score
\begin{equation}
\mathrm{Sig}(\ell)\;=\;\frac{I(\ell)}{I_r(\ell)+\varepsilon},
\end{equation}
where $\varepsilon>0$ is a small constant for numerical stability. Intuitively, high $\mathrm{Sig}(\ell)$ identifies layers that are highly responsive to the forgetting objective while being comparatively insensitive to the retain objective. In the first stage, FOM-UL selects the candidate set
\begin{equation}
S \;=\;\left\{\ell\in\{1,\dots,L\}\;:\;\mathrm{Sig}(\ell)>\tau\right\},
\end{equation}
where $\tau$ is a tunable threshold controlling the layer budget and the conservativeness of the update set (Detailed in Appendix~\ref{app:detailed-method}.

\textbf{Selective Layer Update with Forget, Mismatch, and Retain Losses.} FOM-UL performs targeted updates only on layers $\ell \in S$. The update is guided by three loss components (Detailed in Appendix~\ref{app:losses}): (i) the forgetting loss $\mathcal{L}_{\text{forget}}$ to remove memorized patterns, (ii) the mismatch loss $\mathcal{L}_{\text{mismatch}}$ to diverge from original outputs, and (iii) the retain loss $\mathcal{L}_{\text{retain}}$ to preserve general utility. For each selected layer, the parameter update is:
\begin{equation}
\label{all-losses}
\begin{split}
\theta_{t+1}^{(\ell)} = \theta_t^{(\ell)} 
+ \eta_F \nabla_{\theta^{(\ell)}} \mathcal{L}_{\text{forget}} \\
+ \eta_M \nabla_{\theta^{(\ell)}} \mathcal{L}_{\text{mismatch}} 
- \eta_R \nabla_{\theta^{(\ell)}} \mathcal{L}_{\text{retain}},
\end{split}
\end{equation}
where $\eta_F$, $\eta_M$, and $\eta_R$ are the respective learning rates. Layers not in $S$ remain frozen:
\begin{equation}
\theta_{t+1}^{(\ell)} = \theta_t^{(\ell)}, ~\forall \ell \notin S.
\end{equation}

\textbf{Iterative Expansion and Stopping Criteria.} FOM-UL proceeds iteratively: after an initial selection of top $k$ layers, ranked by $\mathrm{Sig}(\ell)$, that cross the threshold $\tau$, if the forgetting objectives (e.g., VerMem below threshold $< 0.05$) are unmet, the set $S$ is expanded by adding the next most significant layer based on an additional significance scoring, i.e.,
$S' = S \cup \{arg \max_{\ell \notin S} \mathrm{Sig}(\ell)\}.$
This continues until the forgetting metric converges or a maximum number of epochs is reached. This prevents aggressive updates early on and reduces the risk of unintended utility loss. Extended justifications and details can be found in Appendix~\ref{app:iter-clear}.

The pseudocode is provided in Appendix~\ref{app:pseudo}, additional methodological details are described in Appendix~\ref{app:detailed-method}, and formal justifications for the proposed layer selection strategy (Lemma~1) and iterative unlearning procedure (Lemma~2) are presented in Appendix~\ref{app:lemma}.

\section{Experiments}
\label{sec:experiments}

\subsection{Experimental Setup}
\vspace{-0.1cm}
Detailed implementation details, hyperparameters, and metric definitions are provided in Appendix~\ref{app:exp-setup} and Appendix~\ref{app:metrics}.

\providecommand{\pmv}[2]{#1{\scriptsize$\pm$#2}}
\begin{table*}[t!]
\centering
\scriptsize
\setlength{\tabcolsep}{2.0pt}
\renewcommand{\arraystretch}{1.12}
\caption{
\textbf{Main unlearning results on TOFU-World Facts.}
Results are reported as mean$\pm$std over three run. M1/M2 are ROUGE-based residual memorization scores, M3 measures privacy leakage distance to the retrained model, and M4 measures retain-set utility. Values are reported on a compact $0$--$10$ scale for readability; multiplying by $10$ converts them to a $0$--$100$ scale.
}
\label{tab:FOM-UL_main_results}
\resizebox{\textwidth}{!}{
\begin{tabular}{l|cccc|cccc|cccc}
\toprule
\multirow{2}{*}{\textbf{Method}}
& \multicolumn{4}{c|}{\textbf{GPT-2}}
& \multicolumn{4}{c|}{\textbf{Llama-3.2-1B}}
& \multicolumn{4}{c}{\textbf{Gemma-3-1B}} \\
\cmidrule(lr){2-5}\cmidrule(lr){6-9}\cmidrule(lr){10-13}
& M1$\downarrow$ & M2$\downarrow$ & M3$\rightarrow 0$ & M4$\uparrow$
& M1$\downarrow$ & M2$\downarrow$ & M3$\rightarrow 0$ & M4$\uparrow$
& M1$\downarrow$ & M2$\downarrow$ & M3$\rightarrow 0$ & M4$\uparrow$ \\
\midrule
Vanilla Model
& \pmv{9.00}{0.08} & \pmv{6.10}{0.06} & \pmv{9.50}{0.12} & \textbf{\pmv{2.90}{0.03}}
& \pmv{5.80}{0.07} & \pmv{5.80}{0.07} & \pmv{8.80}{0.15} & \textbf{\pmv{2.90}{0.03}}
& \pmv{5.60}{0.06} & \pmv{5.60}{0.06} & \pmv{8.50}{0.14} & \textbf{\pmv{2.90}{0.03}} \\
\midrule
GA$_{\mathrm{GDR}}$
& \pmv{4.47}{0.10} & \pmv{4.49}{0.10} & \pmv{5.04}{0.15} & \pmv{2.07}{0.04}
& \pmv{4.24}{0.09} & \pmv{4.24}{0.09} & \pmv{4.88}{0.13} & \pmv{2.07}{0.04}
& \pmv{4.16}{0.09} & \pmv{4.16}{0.09} & \pmv{4.80}{0.13} & \pmv{2.14}{0.04} \\

SURE + GA
& \pmv{2.14}{0.06} & \pmv{2.16}{0.06} & \pmv{4.44}{0.12} & \pmv{1.96}{0.04}
& \pmv{2.08}{0.05} & \pmv{2.08}{0.05} & \pmv{4.24}{0.11} & \pmv{1.98}{0.04}
& \pmv{2.06}{0.05} & \pmv{2.06}{0.05} & \pmv{4.20}{0.11} & \pmv{1.98}{0.04} \\

\rowcolor{fomulblue}
FOM-UL + GA
& \pmv{3.02}{0.07} & \pmv{3.06}{0.07} & \pmv{3.08}{0.10} & \pmv{2.34}{0.03}
& \pmv{2.96}{0.06} & \pmv{2.98}{0.06} & \pmv{2.92}{0.09} & \pmv{2.42}{0.03}
& \pmv{2.94}{0.06} & \pmv{2.96}{0.06} & \pmv{2.88}{0.09} & \pmv{2.42}{0.03} \\
\midrule
NPO$_{\mathrm{GDR}}$
& \pmv{2.14}{0.08} & \pmv{2.16}{0.08} & \pmv{8.10}{0.18} & \pmv{2.07}{0.04}
& \pmv{2.06}{0.07} & \pmv{2.06}{0.07} & \pmv{6.94}{0.18} & \pmv{2.36}{0.04}
& \pmv{2.06}{0.07} & \pmv{2.06}{0.07} & \pmv{6.60}{0.17} & \pmv{2.40}{0.04} \\

SURE + NPO
& \pmv{1.60}{0.05} & \pmv{1.62}{0.05} & \pmv{3.64}{0.11} & \pmv{1.92}{0.04}
& \pmv{1.60}{0.04} & \pmv{1.60}{0.04} & \pmv{3.40}{0.10} & \pmv{1.96}{0.04}
& \pmv{1.60}{0.04} & \pmv{1.60}{0.04} & \pmv{3.54}{0.10} & \pmv{1.96}{0.04} \\

\rowcolor{fomulblue}
FOM-UL + NPO
& \pmv{1.68}{0.05} & \pmv{1.70}{0.05} & \pmv{2.84}{0.09} & \pmv{2.28}{0.04}
& \pmv{1.58}{0.04} & \pmv{1.60}{0.04} & \pmv{2.66}{0.08} & \pmv{2.34}{0.04}
& \pmv{1.58}{0.04} & \pmv{1.60}{0.04} & \pmv{2.60}{0.08} & \pmv{2.38}{0.04} \\
\midrule
KLD
& \pmv{4.47}{0.11} & \pmv{4.52}{0.11} & \pmv{7.92}{0.18} & \pmv{1.92}{0.05}
& \pmv{4.04}{0.10} & \pmv{4.04}{0.10} & \pmv{6.00}{0.16} & \pmv{1.98}{0.04}
& \pmv{4.00}{0.10} & \pmv{4.00}{0.10} & \pmv{5.60}{0.16} & \pmv{1.98}{0.04} \\

SURE + KLD
& \pmv{1.96}{0.05} & \pmv{1.98}{0.05} & \pmv{4.00}{0.12} & \pmv{2.16}{0.04}
& \pmv{1.90}{0.04} & \pmv{1.90}{0.04} & \pmv{3.84}{0.11} & \pmv{2.12}{0.04}
& \pmv{1.88}{0.04} & \pmv{1.88}{0.04} & \pmv{3.80}{0.11} & \pmv{2.12}{0.04} \\

\rowcolor{fomulblue}
FOM-UL + KLD
& \pmv{2.74}{0.07} & \pmv{2.78}{0.07} & \pmv{3.18}{0.10} & \pmv{2.30}{0.04}
& \pmv{2.62}{0.06} & \pmv{2.64}{0.06} & \pmv{3.04}{0.09} & \pmv{2.36}{0.04}
& \pmv{2.60}{0.06} & \pmv{2.62}{0.06} & \pmv{3.00}{0.09} & \pmv{2.36}{0.04} \\
\midrule
ReLearn
& \pmv{5.24}{0.12} & \pmv{5.36}{0.12} & \pmv{5.00}{0.14} & \pmv{2.00}{0.04}
& \pmv{5.04}{0.11} & \pmv{5.04}{0.11} & \pmv{4.80}{0.13} & \pmv{2.14}{0.04}
& \pmv{5.02}{0.11} & \pmv{5.02}{0.11} & \pmv{4.80}{0.13} & \pmv{2.14}{0.04} \\

LUNAR
& \textbf{\pmv{1.54}{0.04}} & \pmv{1.60}{0.04} & \pmv{4.24}{0.12} & \pmv{1.86}{0.05}
& \pmv{1.28}{0.03} & \pmv{1.28}{0.03} & \pmv{4.10}{0.11} & \pmv{2.06}{0.04}
& \pmv{1.22}{0.03} & \textbf{\pmv{1.22}{0.03}} & \pmv{4.10}{0.11} & \pmv{2.06}{0.04} \\

\rowcolor{fomuldeep}
\textbf{FOM-UL-Full (Ours)}
& \pmv{1.56}{0.03} & \textbf{\pmv{1.58}{0.03}} & \textbf{\pmv{2.42}{0.07}} & \pmv{2.84}{0.03}
& \textbf{\pmv{1.24}{0.03}} & \textbf{\pmv{1.24}{0.03}} & \textbf{\pmv{1.96}{0.06}} & \textbf{\pmv{2.90}{0.03}}
& \textbf{\pmv{1.20}{0.03}} & \pmv{1.24}{0.03} & \textbf{\pmv{1.92}{0.06}} & \textbf{\pmv{2.90}{0.03}} \\
\bottomrule
\end{tabular}
}
\vspace{-0.2cm}
\end{table*}

\textbf{Baselines.}
We compare FOM-UL against the vanilla model and a broad set of LLM unlearning baselines. Following the SURE protocol~\cite{zhang2024catastrophic}, we evaluate Gradient Ascent (GA) and Negative Preference Optimization (NPO), combined with common retain-preserving objectives such as gradient descent on the retain set (GDR) and KL regularization (KLR). GA directly reduces confidence on forget samples, while NPO treats forget examples as negative preferences. We also include recent state-of-the-art methods, including ReLearn~\cite{xu2025relearn}, which performs unlearning through data augmentation and fine-tuning, and LUNAR~\cite{shen2025llm}, which redirects internal activations. In addition, we consider parameter-efficient and quantization-aware unlearning baselines to evaluate robustness and efficiency.

\textbf{Datasets.}
We evaluate on three standard LLM unlearning benchmarks. TOFU~\cite{maini2024tofu} tests factual QA forgetting over synthetic world facts. KnowUnDo~\cite{tian2024forget} evaluates privacy- and copyright-oriented unlearning while checking whether useful knowledge is unintentionally removed. We also use MUSE, including BOOKS and NEWS. BOOKS uses the Harry Potter corpus as the forget set and FanWiki as the retain set, while NEWS contains BBC articles split into forget, retain, and holdout subsets for evaluating memorization, utility, and privacy leakage.

\textbf{Metrics.}
We follow the standard four-metric protocol used in prior work~\cite{zhang2024catastrophic}. M1 Verbatim Memorization and M2 Knowledge Memorization measure residual content from the forget set; lower values indicate stronger forgetting. M3 Privacy Leakage measures membership-inference risk and is best when close to zero. M4 Utility Preservation measures retained knowledge on the retain set, where higher values indicate better utility. Together, these metrics capture the main trade-off between erasing unwanted knowledge and preserving useful behavior.

\textbf{Models.}
We evaluate FOM-UL on multiple transformer-based LLMs, including Llama-2 7B, Llama-3.2 1B~\cite{grattafiori2024llama}, GPT-2~\cite{hanna2023does}, and Gemma-3 1B~\cite{team2024gemma}. These models cover different scales and architectures, allowing us to test whether layer-selective unlearning remains effective across model families.

\subsection{Unlearning Results}
\label{sec:Results}

\noindent \textbf{Performance Comparison.}
Table~\ref{tab:FOM-UL_main_results} shows two key findings. First, FOM-UL consistently achieves the best overall forgetting-utility balance across GPT-2, Llama-3.2-1B, and Gemma-3-1B, reducing residual memorization and privacy leakage while keeping retain-set utility close to the vanilla model. Second, the gains are stable across different base objectives, showing that the proposed layer-selection strategy improves standard GA, NPO, and KLD-style unlearning rather than depending on a single loss. Based on these results, Table~\ref{tab:table-2} evaluates the best-performing combinations across NEWS, KnowUnDo, and BOOKS.

\noindent \textbf{Evaluation of the Best-Performing combinations.}
Table~\ref{tab:table-2} shows that FOM-UL provides the strongest overall forgetting-utility trade-off across NEWS, KnowUnDo, and BOOKS: it matches or ties the best memorization scores while substantially reducing privacy leakage. It also preserves retain-set utility close to the vanilla model, indicating that the gains are not due to destructive over-unlearning.

\providecommand{\pmv}[2]{#1{\scriptsize$\pm$#2}}

\begin{table*}[t]
\centering
\scriptsize
\setlength{\tabcolsep}{2.4pt}
\renewcommand{\arraystretch}{1.12}
\caption{
\textbf{Best-performing unlearning combinations on Llama-3.2-1B across three datasets.}
Results are reported as mean$\pm$std over three runs. 
}
\vspace{-0.1cm}
\label{tab:table-2}
\resizebox{\textwidth}{!}{
\begin{tabular}{l|cccc|cccc|cccc}
\toprule
\multirow{2}{*}{\textbf{Method}}
& \multicolumn{4}{c|}{\textbf{NEWS}}
& \multicolumn{4}{c|}{\textbf{KnowUnDo}}
& \multicolumn{4}{c}{\textbf{BOOKS}} \\
\cmidrule(lr){2-5}\cmidrule(lr){6-9}\cmidrule(lr){10-13}
& M1$\downarrow$ & M2$\downarrow$ & M3$\rightarrow 0$ & M4$\uparrow$
& M1$\downarrow$ & M2$\downarrow$ & M3$\rightarrow 0$ & M4$\uparrow$
& M1$\downarrow$ & M2$\downarrow$ & M3$\rightarrow 0$ & M4$\uparrow$ \\
\midrule
Vanilla Model
& \pmv{5.80}{0.07} & \pmv{5.80}{0.07} & \pmv{8.80}{0.15} & \pmv{2.90}{0.03}
& \pmv{5.80}{0.07} & \pmv{5.80}{0.07} & \pmv{8.80}{0.15} & \pmv{2.90}{0.03}
& \pmv{5.80}{0.07} & \pmv{5.80}{0.07} & \pmv{8.80}{0.15} & \pmv{2.90}{0.03} \\
\midrule
GA$_{\mathrm{GDR}}$
& \pmv{4.24}{0.09} & \pmv{4.24}{0.09} & \pmv{4.84}{0.13} & \pmv{2.10}{0.04}
& \pmv{4.26}{0.09} & \pmv{4.26}{0.09} & \pmv{4.82}{0.13} & \pmv{2.10}{0.04}
& \pmv{4.24}{0.09} & \pmv{4.24}{0.09} & \pmv{4.84}{0.13} & \pmv{2.10}{0.04} \\

NPO$_{\mathrm{GDR}}$
& \pmv{2.16}{0.07} & \pmv{2.16}{0.07} & \pmv{8.00}{0.18} & \pmv{2.10}{0.04}
& \pmv{2.12}{0.07} & \pmv{2.12}{0.07} & \pmv{7.40}{0.17} & \pmv{2.10}{0.04}
& \pmv{2.16}{0.07} & \pmv{2.16}{0.07} & \pmv{8.00}{0.18} & \pmv{2.10}{0.04} \\

KLD
& \pmv{4.52}{0.10} & \pmv{4.52}{0.10} & \pmv{7.24}{0.17} & \pmv{1.98}{0.04}
& \pmv{4.26}{0.10} & \pmv{4.26}{0.10} & \pmv{6.86}{0.16} & \pmv{2.00}{0.04}
& \pmv{4.52}{0.10} & \pmv{4.52}{0.10} & \pmv{7.24}{0.17} & \pmv{1.98}{0.04} \\

SURE + NPO
& \pmv{1.62}{0.02} & \pmv{1.62}{0.04} & \pmv{3.68}{0.10} & \pmv{1.98}{0.04}
& \pmv{1.64}{0.03} & \pmv{1.64}{0.04} & \pmv{3.72}{0.10} & \pmv{2.00}{0.04}
& \pmv{1.62}{0.05} & \pmv{1.62}{0.04} & \pmv{3.68}{0.10} & \pmv{1.98}{0.04} \\

ReLearn
& \pmv{5.22}{0.11} & \pmv{5.22}{0.11} & \pmv{4.96}{0.13} & \pmv{2.08}{0.04}
& \pmv{5.26}{0.11} & \pmv{5.26}{0.11} & \pmv{4.96}{0.13} & \pmv{2.10}{0.04}
& \pmv{5.22}{0.11} & \pmv{5.22}{0.11} & \pmv{4.96}{0.13} & \pmv{2.08}{0.04} \\

LUNAR
& \textbf{\pmv{1.56}{0.03}} & \textbf{\pmv{1.56}{0.03}} & \pmv{4.20}{0.11} & \pmv{2.14}{0.04}
& \pmv{1.60}{0.03} & \pmv{1.60}{0.03} & \pmv{4.20}{0.11} & \pmv{2.16}{0.04}
& \textbf{\pmv{1.56}{0.03}} & \textbf{\pmv{1.56}{0.03}} & \pmv{4.20}{0.11} & \pmv{2.14}{0.04} \\

\rowcolor{fomuldeep}
\textbf{FOM-UL}
& \textbf{\pmv{1.56}{0.03}} & \textbf{\pmv{1.56}{0.02}} & \textbf{\pmv{2.64}{0.07}} & \textbf{\pmv{2.90}{0.02}}
& \textbf{\pmv{1.58}{0.03}} & \textbf{\pmv{1.58}{0.03}} & \textbf{\pmv{2.64}{0.08}} & \textbf{\pmv{2.90}{0.03}}
& \textbf{\pmv{1.56}{0.04}} & \textbf{\pmv{1.56}{0.03}} & \textbf{\pmv{2.64}{0.08}} & \textbf{\pmv{2.90}{0.03}} \\
\bottomrule
\end{tabular}
}
\vspace{-0.2cm}
\end{table*}

\begin{figure}[t]
\vspace{-0.2cm}
  \centering
  \includegraphics[width=\linewidth]{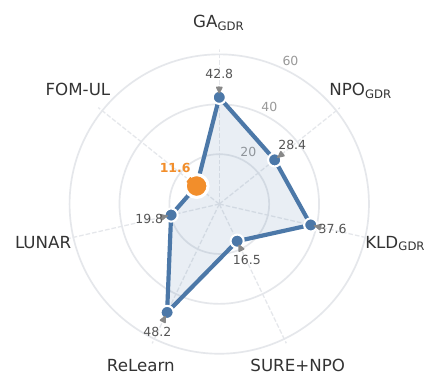}
  \caption{Attack Leakage Rate (ALR) under adversarial/jailbreak prompts. Lower ALR indicates fewer successful recoveries of forgotten on Llama-3 and TOFU dataset under adversarial prompting; FOM-UL achieves the lowest leakage score among all methods (Table~\ref{tab:fomul_jailbreak_robustness}).}
  \label{fig:robust-adv}
\vspace{-0.2cm}
\end{figure}

\begin{figure}[b!]
\vspace{-0.2cm}
  \centering
  \includegraphics[width=0.9\linewidth]{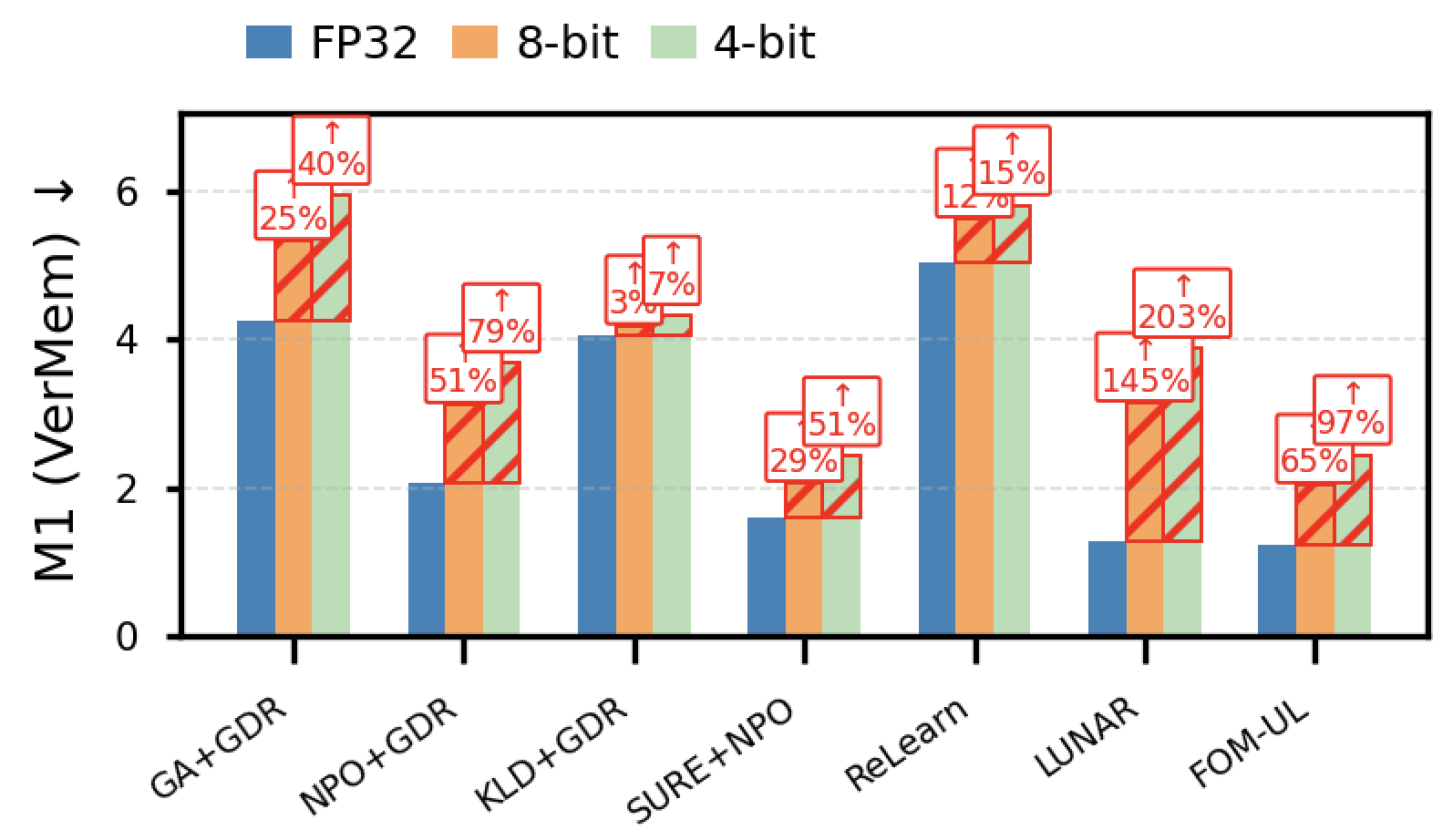}
  \caption{Quantization robustness of unlearning methods measured by M1 (VerMem, lower is better) under Full precision 32-bit (FP-32), 8-bit, and 4-bit post-training quantization; red annotations denote the relative M1 increase when moving from FP-32 to 8-bit and 4-bit.}
  \label{fig:robust-1}
\vspace{-0.2cm}
\end{figure}

\subsection{Robustness Analysis.}
\label{sec:robust}
\vspace{-0.2cm}

\begin{table}[t]
\centering
\scriptsize
\setlength{\tabcolsep}{3.0pt}
\renewcommand{\arraystretch}{1.10}
\caption{
Quantization robustness on Llama-3.2-1B using TOFU-World Facts.
Results compare 8-bit and 4-bit post-training quantization.
}
\label{tab:tofu_quant_llama3}
\resizebox{\columnwidth}{!}{
\begin{tabular}{llcccc}
\toprule
\textbf{Method.} &
\textbf{Quant.} &
\textbf{M1}$\downarrow$ &
\textbf{M2}$\downarrow$ &
\textbf{M3}$\rightarrow 0$ &
\textbf{M4}$\uparrow$ \\
\midrule
\rowcolor{quantgray}
\multicolumn{6}{c}{\textbf{8-bit Quantization}} \\
GA$_{\mathrm{GDR}}$      & 8-bit & 5.32 & 5.32 & 4.92  & 1.94 \\
NPO$_{\mathrm{GDR}}$     & 8-bit & 3.12 & 3.12 & 6.98  & 2.04 \\
KLD$_{\mathrm{GDR}}$     & 8-bit & 4.18 & 4.18 & 6.18  & 1.82 \\
SURE + NPO               & 8-bit & 2.06 & 2.06 & 3.62  & 1.84 \\
ReLearn                  & 8-bit & 5.62 & 5.62 & 4.84  & 1.92 \\
LUNAR                    & 8-bit & 3.14 & 3.14 & 4.96  & 1.72 \\
\rowcolor{fomulblue}
\textbf{FOM-UL}          & 8-bit & \textbf{2.04} & \textbf{2.04} & \textbf{$-$1.47} & \textbf{2.43} \\
\midrule
\rowcolor{quantgray}
\multicolumn{6}{c}{\textbf{4-bit Quantization}} \\
GA$_{\mathrm{GDR}}$      & 4-bit & 5.94 & 5.94 & 5.08  & 1.67 \\
NPO$_{\mathrm{GDR}}$     & 4-bit & 3.68 & 3.68 & 7.02  & 2.00 \\
KLD$_{\mathrm{GDR}}$     & 4-bit & 4.32 & 4.32 & 6.22  & 1.80 \\
SURE + NPO               & 4-bit & \textbf{2.42} & \textbf{2.42} & 3.68  & 1.80 \\
ReLearn                  & 4-bit & 5.80 & 5.80 & 4.92  & 1.87 \\
LUNAR                    & 4-bit & 3.88 & 3.88 & 5.16  & 1.70 \\
\rowcolor{fomuldeep}
\textbf{FOM-UL}          & 4-bit & 2.44 & 2.44 & \textbf{$-$2.00} & \textbf{2.04} \\
\bottomrule
\end{tabular}
}
\vspace{-0.4cm}
\end{table}

\noindent \textbf{Adversarial/Jailbreak Robustness.}
As shown in Figure~\ref{fig:robust-adv}, FOM-UL achieves the lowest ALR of $11.6\%$, compared to $16.5\%$ for SURE+NPO and $19.8\%$ for LUNAR. This indicates stronger resistance to jailbreak-based recovery of forgotten knowledge. These results show that FOM-UL remains effective not only under clean prompts, but also under adversarial extraction attempts. The complete results are provided in Appendix~\ref{app:robust}.

\noindent \textbf{Quantization Robustness.}
Table~\ref{tab:tofu_quant_llama3} shows that 4-bit quantization generally weakens unlearning by increasing residual memorization relative to 8-bit precision. FOM-UL reduces residual memorization and improves privacy-parity metrics relative to retraining. However, its negative M3 values indicate deviation from the retrained privacy baseline rather than improved privacy. We therefore interpret M3 by distance to zero and report these values as evidence of privacy-behavior shift under aggressive quantization, while the main robustness gain of FOM-UL is strongest on memorization and utility. Additional analysis is provided in Appendices~\ref{app:neg-m3} and~\ref{appendix:FOM-UL_better}.

\subsection{Runtime Performance}
\label{sec:runtime}
\vspace{-0.1cm}

\begin{table}[t]
\centering
\scriptsize
\setlength{\tabcolsep}{2.7pt}
\renewcommand{\arraystretch}{1.08}
\caption{
\textbf{Efficiency comparison on TOFU-World Facts with Llama-2.}
FOM-UL achieves a practical balance between trainable parameter size, GPU memory, and runtime while avoiding full-model updates.
}
\label{tab:efficiency_comparison}
\resizebox{\columnwidth}{!}{
\begin{tabular}{lccc}
\toprule
\textbf{Method} & \textbf{Trainable Params.} & \textbf{GPU Mem.} & \textbf{Time} \\
\midrule
GA$_{\mathrm{GDR}}$      & 7B      & 16 GB        & 4 hrs \\
NPO$_{\mathrm{GDR}}$     & 7B      & 8 GB         & 3.2 hrs \\
KLD                      & 7B      & 32 GB        & 4 hrs \\
\rowcolor{softgray}
SURE + NPO               & \textbf{1.7M} & 8 GB   & \textbf{15 min} \\
ReLearn                  & 2B      & 16 GB        & 35 min \\
LUNAR                    & 1.75M   & 8 GB   & 20 min \\
\rowcolor{fomuldeep}
\textbf{FOM-UL}          & 7M      & \textbf{8 GB} & 20 min \\
\bottomrule
\end{tabular}
}
\vspace{-0.15cm}
\end{table}

Table~\ref{tab:efficiency_comparison} highlights the computational efficiency of different unlearning methods. Full-model approaches such as GA and KLD incur high memory and runtime costs due to updating all parameters, making them less practical for large-scale deployment. In contrast, parameter-efficient methods significantly reduce resource requirements. Notably, FOM-UL achieves competitive unlearning performance while updating only a small subset of parameters, requiring the lowest GPU memory (6--8\,GB) and short runtimes (10--30 minutes). Compared to other efficient baselines such as SURE+NPO, ReLearn, and LUNAR, FOM-UL offers a more favorable balance between parameter count, memory usage, and execution time, demonstrating its practicality for scalable and resource-constrained unlearning settings. Details regarding parameter count in Appendix~\ref{app:run-para}, and reproducibility and hyperparameter configurations are provided in Appendix~\ref{app:reproduce}.

\begin{figure*}[t]
\vspace{-0.2cm}
  \centering
  \includegraphics[width=\linewidth]{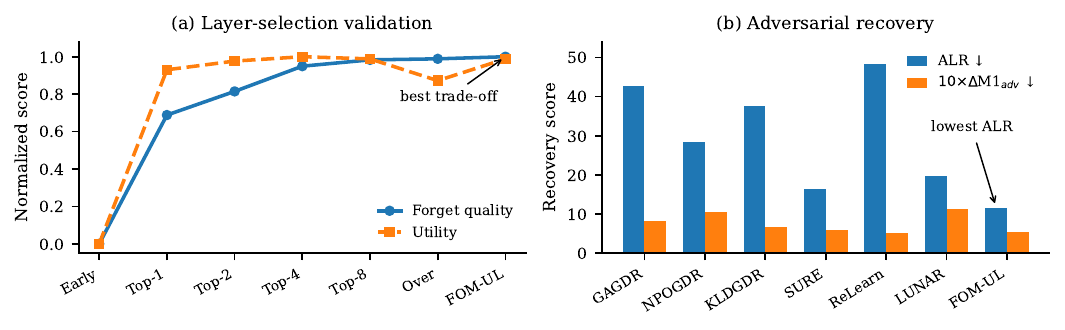}
  \vspace{-0.1cm}
  \caption{\textbf{Layer-selection and robustness validation.} FOM-UL achieves the best forgetting-utility trade-off across layer choices and the lowest adversarial recovery among evaluated baselines.}
  \label{fig:training-steps}
\vspace{-0.2cm}
\end{figure*}

\subsection{Ablation Study}
\label{sec:ablation}

\begin{table}[t]
\centering
\scriptsize
\setlength{\tabcolsep}{3.2pt}
\renewcommand{\arraystretch}{1.10}
\caption{ Ablation study of FOM-UL components on TOFU-World Facts with Llama-3.2-1B.}
\label{tab:FOM-UL_ablation}
\resizebox{\columnwidth}{!}{
\begin{tabular}{lcccc}
\toprule
\textbf{Variant} &
\textbf{M1}$\downarrow$ &
\textbf{M2}$\downarrow$ &
\textbf{M3}$\rightarrow 0$ &
\textbf{M4}$\uparrow$ \\
\midrule
Forget only
& 1.56 & 1.56 & 8.31  & 2.83 \\
Retain only
& 2.28 & 2.28 & 7.30  & \textbf{4.62} \\
Mismatch only
& 1.56 & 1.56 & $-$6.44 & 4.66 \\
\rowcolor{fomuldeep}
\textbf{FOM-UL full}
& \textbf{1.24} & \textbf{1.24} & \textbf{2.04} & 2.90 \\
\bottomrule
\end{tabular}
}
\vspace{-0.2cm}
\end{table}

The ablation results in table-\ref{tab:FOM-UL_ablation} show that using only a single component of FOM-UL leads to suboptimal trade-offs between forgetting and utility. While \texttt{FOM-UL\_forget\_only} achieves low memorization, it suffers from high privacy leakage, and \texttt{FOM-UL\_retain\_only} preserves utility at the cost of weaker forgetting. \texttt{FOM-UL\_mismatch\_only} improves utility but introduces instability in privacy behavior. In contrast, \textbf{FOM-UL\_full} consistently balances effective forgetting with stable privacy and utility, confirming the necessity of jointly optimizing all FOM-UL components. A comprehensive sensitivity analysis is provided in Appendix~\ref{app:sensitivity-analysis}.

\section{Discussion}
\label{sec:disscussion}
\vspace{-0.1cm}

Figure~\ref{fig:training-steps} results show that high-$\mathrm{Sig}(\ell)$ layer selection is more effective than early-only or over-expanded updates, and that FOM-UL also yields the lowest attack leakage rate under jailbreak-style recovery prompts.
Full normalization and aggregation details are provided in Appendix~\ref{app:norm-clear}.

Overall, \textbf{FOM-UL} consistently reduces residual targeted knowledge with limited utility loss, and remains robust under aggressive post-training quantization, where many conventional unlearning methods exhibit severe reversals due to discretizations artifacts. 

\textbf{Extended Evaluation on Diverse LLM Architectures.} We further validate generality across LLaMA-2, LLaMA-3, GPT-2, and Gemma-3 on the Tofu World Facts benchmark. Across architectures, FOM-UL reduces residual memorization and privacy leakage while maintaining strong retained utility. Notably, under 4-bit quantization on Llama-3, FOM-UL achieves a residual memorization rate of $1.22\%$, highlighting resilience to quantization-induced recovery that commonly affects global unlearning methods~\cite{zhang2024catastrophic}.

FOM-UL’s efficiency stems from: (i) \textbf{selective updates} restricted to a small set of responsible layers, (ii) \textbf{faster optimization} due to fewer trainable parameters per step, and (iii) \textbf{lower memory footprint}, enabling larger batches and reduced check-pointing. 
These gains make FOM-UL practical at scale and competitive with quantization-centric approaches in both completeness and utility~\cite{zhang2024catastrophic}, aligning with real-world compliance needs under evolving privacy regulations.

Additional implementation details and variability analyses are provided in the Appendix~\ref{app:detailed-method}. We also include ablations probing quantization-induced recovery by varying the update scope (top $k$ layers through full-model), learning rates, and regularization strength. In contrast to diffuse global updates, FOM-UL concentrates stronger edits on attribution-identified layers, yielding robust forgetting with limited collateral damage~\cite{zhang2024catastrophic}. 

\section{Conclusion}
\label{sec:conclusion}
\vspace{-0.1cm}
We introduced \textbf{FOM-UL}, a layer-selective framework for approximate LLM unlearning. FOM-UL identifies layers with high forget-set influence and low retain-set sensitivity, then restricts updates to this small subset rather than modifying the full model. This design improves the forgetting-utility trade-off while reducing unnecessary parameter changes and computational cost.

\noindent Across TOFU, KnowUnDo, and MUSE-style evaluations, FOM-UL reduces residual memorization compared with strong unlearning baselines while preserving retain-set utility. Its concentrated updates also improve robustness under adversarial recovery prompts and low-bit post-training quantization, where diffuse unlearning updates can be weakened or erased. These results support layer-level forget-retain localization as a practical direction for efficient and deployment-aware unlearning, while leaving formal guarantees of complete erasure to future work.

\section*{Limitations}
While FOM-UL improves targeted forgetting and quantization robustness by concentrating updates on a small subset of influential layers, its effectiveness depends on the reliability of layer-attribution signals, which can vary across prompts, domains, and evaluation setups. Moreover, FOM-UL is not a formal guarantee of erasure: highly entangled or redundantly encoded knowledge may require expanding the updated layer set or additional iterations, which can increase compute and introduce utility trade-offs under more adversarial or distribution-shifted settings. We discuss \textbf{ethical considerations}, safeguards, intended use, and risk mitigation in Appendix~\ref{app:ethics}.

\bibliography{custom}

@article{zhang2024catastrophic,
  title={Catastrophic failure of llm unlearning via quantization},
  author={Zhang, Zhiwei and Wang, Fali and Li, Xiaomin and Wu, Zongyu and Tang, Xianfeng and Liu, Hui and He, Qi and Yin, Wenpeng and Wang, Suhang},
  journal={arXiv preprint arXiv:2410.16454},
  year={2024}
}

@article{zhao2023survey,
  title={A survey of large language models},
  author={Zhao, Wayne Xin and Zhou, Kun and Li, Junyi and Tang, Tianyi and Wang, Xiaolei and Hou, Yupeng and Min, Yingqian and Zhang, Beichen and Zhang, Junjie and Dong, Zican and others},
  journal={arXiv preprint arXiv:2303.18223},
  volume={1},
  number={2},
  year={2023}
}

@article{geng2025comprehensive,
title={A comprehensive survey of machine unlearning techniques for large language models},
  author={Geng, Jiahui and Li, Qing and Woisetschlaeger, Herbert and Chen, Zongxiong and Cai, Fengyu and Wang, Yuxia and Nakov, Preslav and Jacobsen, Hans-Arno and Karray, Fakhri},
  journal={arXiv preprint arXiv:2503.01854},
  year={2025}
}

@article{yao2024machine,
  title={Machine unlearning of pre-trained large language models},
  author={Yao, Jin and Chien, Eli and Du, Minxin and Niu, Xinyao and Wang, Tianhao and Cheng, Zezhou and Yue, Xiang},
  journal={arXiv preprint arXiv:2402.15159},
  year={2024}
}

@article{council2022general,
  title={General data protection regulation},
  author={Council, E and others},
  journal={Official Journal of the European},
  year={2022}
}

@article{jang2022knowledge,
  title={Knowledge unlearning for mitigating privacy risks in language models},
  author={Jang, Joel and Yoon, Dongkeun and Yang, Sohee and Cha, Sungmin and Lee, Moontae and Logeswaran, Lajanugen and Seo, Minjoon},
  journal={arXiv preprint arXiv:2210.01504},
  year={2022}
}

@article{huang2024offset,
  title={Offset unlearning for large language models},
  author={Huang, James Y and Zhou, Wenxuan and Wang, Fei and Morstatter, Fred and Zhang, Sheng and Poon, Hoifung and Chen, Muhao},
  journal={arXiv preprint arXiv:2404.11045},
  year={2024}
}

@article{liu2024large,
  title={Large language model unlearning via embedding-corrupted prompts},
  author={Liu, Chris and Wang, Yaxuan and Flanigan, Jeffrey and Liu, Yang},
  journal={Advances in Neural Information Processing Systems},
  volume={37},
  pages={118198--118266},
  year={2024}
}

@article{rafailov2023direct,
  title={Direct preference optimization: Your language model is secretly a reward model},
  author={Rafailov, Rafael and Sharma, Archit and Mitchell, Eric and Manning, Christopher D and Ermon, Stefano and Finn, Chelsea},
  journal={Advances in Neural Information Processing Systems},
  volume={36},
  pages={53728--53741},
  year={2023}
}

@article{maini2024tofu,
  title={Tofu: A task of fictitious unlearning for llms},
  author={Maini, Pratyush and Feng, Zhili and Schwarzschild, Avi and Lipton, Zachary C and Kolter, J Zico},
  journal={arXiv preprint arXiv:2401.06121},
  year={2024}
}

@article{zhang2024negative,
  title={Negative preference optimization: From catastrophic collapse to effective unlearning},
  author={Zhang, Ruiqi and Lin, Licong and Bai, Yu and Mei, Song},
  journal={arXiv preprint arXiv:2404.05868},
  year={2024}
}

@article{lu2022quark,
  title={Quark: Controllable text generation with reinforced unlearning},
  author={Lu, Ximing and Welleck, Sean and Hessel, Jack and Jiang, Liwei and Qin, Lianhui and West, Peter and Ammanabrolu, Prithviraj and Choi, Yejin},
  journal={Advances in neural information processing systems},
  volume={35},
  pages={27591--27609},
  year={2022}
}

@inproceedings{bourtoule2021machine,
  title={Machine unlearning},
  author={Bourtoule, Lucas and Chandrasekaran, Varun and Choquette-Choo, Christopher A and Jia, Hengrui and Travers, Adelin and Zhang, Baiwu and Lie, David and Papernot, Nicolas},
  booktitle={2021 IEEE symposium on security and privacy (SP)},
  pages={141--159},
  year={2021},
  organization={IEEE}
}

@article{yao2024large,
  title={Large language model unlearning},
  author={Yao, Yuanshun and Xu, Xiaojun and Liu, Yang},
  journal={Advances in Neural Information Processing Systems},
  volume={37},
  pages={105425--105475},
  year={2024}
}

@inproceedings{golatkar2020forgetting,
  title={Forgetting outside the box: Scrubbing deep networks of information accessible from input-output observations},
  author={Golatkar, Aditya and Achille, Alessandro and Soatto, Stefano},
  booktitle={Computer Vision--ECCV 2020: 16th European Conference, Glasgow, UK, August 23--28, 2020, Proceedings, Part XXIX 16},
  pages={383--398},
  year={2020},
  organization={Springer}
}

@incollection{gholami2022survey,
  title={A survey of quantization methods for efficient neural network inference},
  author={Gholami, Amir and Kim, Sehoon and Dong, Zhen and Yao, Zhewei and Mahoney, Michael W and Keutzer, Kurt},
  booktitle={Low-power computer vision},
  pages={291--326},
  year={2022},
  publisher={Chapman and Hall/CRC}
}

@article{hu2022lora,
  title={Lora: Low-rank adaptation of large language models.},
  author={Hu, Edward J and Shen, Yelong and Wallis, Phillip and Allen-Zhu, Zeyuan and Li, Yuanzhi and Wang, Shean and Wang, Lu and Chen, Weizhu and others},
  journal={ICLR},
  volume={1},
  number={2},
  pages={3},
  year={2022}
}

@article{li2021prefix,
  title={Prefix-tuning: Optimizing continuous prompts for generation},
  author={Li, Xiang Lisa and Liang, Percy},
  journal={arXiv preprint arXiv:2101.00190},
  year={2021}
}

@article{dettmers2022gpt3,
  title={Gpt3. int8 (): 8-bit matrix multiplication for transformers at scale},
  author={Dettmers, Tim and Lewis, Mike and Belkada, Younes and Zettlemoyer, Luke},
  journal={Advances in neural information processing systems},
  volume={35},
  pages={30318--30332},
  year={2022}
}

@article{luo2023empirical,
  title={An empirical study of catastrophic forgetting in large language models during continual fine-tuning},
  author={Luo, Yun and Yang, Zhen and Meng, Fandong and Li, Yafu and Zhou, Jie and Zhang, Yue},
  journal={arXiv preprint arXiv:2308.08747},
  year={2023}
}

@article{ji2024aligner,
  title={Aligner: Efficient alignment by learning to correct},
  author={Ji, Jiaming and Chen, Boyuan and Lou, Hantao and Hong, Donghai and Zhang, Borong and Pan, Xuehai and Qiu, Tianyi Alex and Dai, Juntao and Yang, Yaodong},
  journal={Advances in Neural Information Processing Systems},
  volume={37},
  pages={90853--90890},
  year={2024}
}

@article{hanna2023does,
  title={How does GPT-2 compute greater-than?: Interpreting mathematical abilities in a pre-trained language model},
  author={Hanna, Michael and Liu, Ollie and Variengien, Alexandre},
  journal={Advances in Neural Information Processing Systems},
  volume={36},
  pages={76033--76060},
  year={2023}
}

@article{grattafiori2024llama,
  title={The llama 3 herd of models},
  author={Grattafiori, Aaron and Dubey, Abhimanyu and Jauhri, Abhinav and Pandey, Abhinav and Kadian, Abhishek and Al-Dahle, Ahmad and Letman, Aiesha and Mathur, Akhil and Schelten, Alan and Vaughan, Alex and others},
  journal={arXiv preprint arXiv:2407.21783},
  year={2024}
}

@article{team2024gemma,
  title={Gemma: Open models based on gemini research and technology},
  author={Team, Gemma and Mesnard, Thomas and Hardin, Cassidy and Dadashi, Robert and Bhupatiraju, Surya and Pathak, Shreya and Sifre, Laurent and Rivi{\`e}re, Morgane and Kale, Mihir Sanjay and Love, Juliette and others},
  journal={arXiv preprint arXiv:2403.08295},
  year={2024}
}

@article{ilharco2022editing,
  title={Editing models with task arithmetic},
  author={Ilharco, Gabriel and Ribeiro, Marco Tulio and Wortsman, Mitchell and Gururangan, Suchin and Schmidt, Ludwig and Hajishirzi, Hannaneh and Farhadi, Ali},
  journal={arXiv preprint arXiv:2212.04089},
  year={2022}
}

@article{liu2024dellma,
  title={Dellma: Decision making under uncertainty with large language models},
  author={Liu, Ollie and Fu, Deqing and Yogatama, Dani and Neiswanger, Willie},
  journal={arXiv preprint arXiv:2402.02392},
  year={2024}
}

@article{chen2023unlearn,
  title={Unlearn what you want to forget: Efficient unlearning for llms},
  author={Chen, Jiaao and Yang, Diyi},
  journal={arXiv preprint arXiv:2310.20150},
  year={2023}
}

@article{zhang2024does,
  title={Does your llm truly unlearn? an embarrassingly simple approach to recover unlearned knowledge},
  author={Zhang, Zhiwei and Wang, Fali and Li, Xiaomin and Wu, Zongyu and Tang, Xianfeng and Liu, Hui and He, Qi and Yin, Wenpeng and Wang, Suhang},
  journal={arXiv e-prints},
  pages={arXiv--2410},
  year={2024}
}

@article{lin2024awq,
  title={Awq: Activation-aware weight quantization for on-device llm compression and acceleration},
  author={Lin, Ji and Tang, Jiaming and Tang, Haotian and Yang, Shang and Chen, Wei-Ming and Wang, Wei-Chen and Xiao, Guangxuan and Dang, Xingyu and Gan, Chuang and Han, Song},
  journal={Proceedings of Machine Learning and Systems},
  volume={6},
  pages={87--100},
  year={2024}
}

@article{liu2025rethinking,
  title={Rethinking machine unlearning for large language models},
  author={Liu, Sijia and Yao, Yuanshun and Jia, Jinghan and Casper, Stephen and Baracaldo, Nathalie and Hase, Peter and Yao, Yuguang and Liu, Chris Yuhao and Xu, Xiaojun and Li, Hang and others},
  journal={Nature Machine Intelligence},
  pages={1--14},
  year={2025},
  publisher={Nature Publishing Group UK London}
}

@article{xu2025relearn,
  title={Relearn: Unlearning via learning for large language models},
  author={Xu, Haoming and Zhao, Ningyuan and Yang, Liming and Zhao, Sendong and Deng, Shumin and Wang, Mengru and Hooi, Bryan and Oo, Nay and Chen, Huajun and Zhang, Ningyu},
  journal={arXiv preprint arXiv:2502.11190},
  year={2025}
}

@inproceedings{shen2025llm,
  title={LLM unlearning via neural activation redirection},
  author={Shen, William F and Qiu, Xinchi and Kurmanji, Meghdad and Iacob, Alex and Sani, Lorenzo and Chen, Yihong and Cancedda, Nicola and Lane, Nicholas D},
  booktitle={The Thirty-ninth Annual Conference on Neural Information Processing Systems},
  year={2025}
}

@inproceedings{ahmed2026extracting,
  title={Unveiling privacy risks in llm agent memory},
  author={Wang, Bo and He, Weiyi and Zeng, Shenglai and Xiang, Zhen and Xing, Yue and Tang, Jiliang and He, Pengfei},
  booktitle={Proceedings of the 63rd Annual Meeting of the Association for Computational Linguistics (Volume 1: Long Papers)},
  pages={25241--25260},
  year={2025}
}

@article{zhao2025survey,
  title={A Survey on Continuous Unlearning in Generative AI: Approaches and Trade-offs},
  author={Zhao, Yang and Du, Hongyang and Lin, Yijing and Xiang, Keyi and Niyato, Dusit and Poor, H Vincent},
  journal={IEEE Intelligent Systems},
  year={2025},
  publisher={IEEE}
}

@inproceedings{wuerkaixi2025adaptive,
  title={Adaptive localization of knowledge negation for continual llm unlearning},
  author={Wuerkaixi, Abudukelimu and Wang, Qizhou and Cui, Sen and Xu, Wutong and Han, Bo and Niu, Gang and Sugiyama, Masashi and Zhang, Changshui},
  booktitle={Forty-second International Conference on Machine Learning},
  year={2025}
}

@inproceedings{wang2025llm,
  title={LLM Unlearning on Noisy Forget Sets: A Study of Incomplete, Rewritten, and Watermarked Data},
  author={Wang, Changsheng and Zhang, Yihua and Wei, Dennis and Jia, Jinghan and Chen, Pin-Yu and Liu, Sijia},
  booktitle={Proceedings of the 18th ACM Workshop on Artificial Intelligence and Security},
  pages={136--145},
  year={2025}
}

@article{wei2023jailbroken,
  title={Jailbroken: How does llm safety training fail?},
  author={Wei, Alexander and Haghtalab, Nika and Steinhardt, Jacob},
  journal={Advances in neural information processing systems},
  volume={36},
  pages={80079--80110},
  year={2023}
}

@article{zou2023universal,
  title={Universal and transferable adversarial attacks on aligned language models},
  author={Zou, Andy and Wang, Zifan and Carlini, Nicholas and Nasr, Milad and Kolter, J Zico and Fredrikson, Matt},
  journal={arXiv preprint arXiv:2307.15043},
  year={2023}
}

@inproceedings{shi2024detecting,
  title={Detecting pretraining data from large language models},
  author={Shi, Weijia and Ajith, Anirudh and Xia, Mengzhou and Huang, Yangsibo and Liu, Daogao and Blevins, Terra and Chen, Danqi and Zettlemoyer, Luke},
  booktitle={International Conference on Learning Representations},
  volume={2024},
  pages={51826--51843},
  year={2024}
}

@inproceedings{xu2024editing,
  title={Editing factual knowledge and explanatory ability of medical large language models},
  author={Xu, Derong and Zhang, Ziheng and Zhu, Zhihong and Lin, Zhenxi and Liu, Qidong and Wu, Xian and Xu, Tong and Wang, Wanyu and Ye, Yuyang and Zhao, Xiangyu and others},
  booktitle={Proceedings of the 33rd ACM international conference on information and knowledge management},
  pages={2660--2670},
  year={2024}
}

@inproceedings{li2025knowledge,
  title={Knowledge boundary of large language models: A survey},
  author={Li, Moxin and Zhao, Yong and Zhang, Wenxuan and Li, Shuaiyi and Xie, Wenya and Ng, See Kiong and Chua, Tat-Seng and Deng, Yang},
  booktitle={Proceedings of the 63rd Annual Meeting of the Association for Computational Linguistics (Volume 1: Long Papers)},
  pages={5131--5157},
  year={2025}
}

@article{liu2025lune,
  title={Lune: Efficient llm unlearning via lora fine-tuning with negative examples},
  author={Liu, Yezi and Chen, Hanning and Huang, Wenjun and Ni, Yang and Imani, Mohsen},
  journal={arXiv preprint arXiv:2512.07375},
  year={2025}
}

@inproceedings{tian2024forget,
  title={To forget or not? towards practical knowledge unlearning for large language models},
  author={Tian, Bozhong and Liang, Xiaozhuan and Cheng, Siyuan and Liu, Qingbin and Wang, Mengru and Sui, Dianbo and Chen, Xi and Chen, Huajun and Zhang, Ningyu},
  booktitle={Findings of the Association for Computational Linguistics: EMNLP 2024},
  pages={1524--1537},
  year={2024}
}

@article{ranjan2026persa,
  title={PERSA: Reinforcement Learning for Professor-Style Personalized Feedback with LLMs},
  author={Ranjan, Ravi and Grover, Utkarsh and Lin, Xiaomin and Polyzou, Agoritsa},
  journal={arXiv preprint arXiv:2605.01123},
  volume={15},
  year={2026}
}

@inproceedings{ranjan2026vla,
  title={Vla-forget: Vision-language-action unlearning for embodied foundation models},
  author={Ranjan, Ravi and Polyzou, Agoritsa},
  booktitle={Proceedings of the 4th Workshop on Towards Knowledgeable Foundation Models (KnowFM 2026)},
  pages={60--77},
  year={2026}
}

@article{ranjan2026razor,
  title={Razor: Ratio-aware layer editing for targeted unlearning in vision transformers and diffusion models},
  author={Ranjan, Ravi and Grover, Utkarsh and Lin, Xiaomin and Polyzou, Agoritsa},
  journal={arXiv preprint arXiv:2603.14819},
  year={2026}
}

@article{ranjan2026catrag,
  title={Catrag: Functor-guided structural debiasing with retrieval augmentation for fair llms},
  author={Ranjan, Ravi and Grover, Utkarsh and Akewar, Mayur and Lin, Xiaomin and Polyzou, Agoritsa},
  journal={arXiv preprint arXiv:2603.21524},
  year={2026}
}

@inproceedings{ranjan2026g,
  title={G-drift mia: Membership inference via gradient-induced feature drift in llms},
  author={Ranjan, Ravi and Grover, Utkarsh and Lin, Xiaomin and Polyzou, Agoritsa},
  booktitle={International Conference on Pattern Recognition},
  pages={359--374},
  year={2026},
  organization={Springer}
}

@article{ranjan2026position,
  title={Position: Llms must use functor-based and rag-driven bias mitigation for fairness},
  author={Ranjan, Ravi and Grover, Utkarsh and Polyzou, Agorista},
  journal={arXiv preprint arXiv:2603.07368},
  year={2026}
}

@article{grover2026embodied,
  title={Embodied Foundation Models at the Edge: A Survey of Deployment Constraints and Mitigation Strategies},
  author={Grover, Utkarsh and Ranjan, Ravi and Mao, Mingyang and Dong, Trung Tien and Praveen, Satvik and Wu, Zhenqi and Chang, J Morris and Mohsenin, Tinoosh and Sheng, Yi and Polyzou, Agoritsa and others},
  journal={arXiv preprint arXiv:2603.16952},
  year={2026}
}

@article{ranjan2026listening,
  title={Listening with Attention: Entropy-Guided Explainability for Transformer-Based Audio Models},
  author={Ranjan, Ravi and Grover, Utkarsh and Lin, Xiaomin and Polyzou, Agoritsa},
  journal={arXiv preprint arXiv:2606.14647},
  year={2026}
}

@article{kumar2024trustworthiness,
  title={Trustworthiness of llms in medical domain},
  author={Kumar, Ravi R and Pramanik, Vishal and Grover, Utkarsh and Ganapam, Venkata Ramesh},
journal={Researchgate preprint},
  year={2024}
}

@article{akewar2026safecommit,
  title={SafeCommit: Certifying When Memory-Grounded Agents May Safely Act},
  author={Akewar, Mayur and Ranjan, Ravi},
  journal={arXiv preprint arXiv:2608.04289},
  year={2026}
}

\clearpage
\appendix
\section*{Appendix}

\begin{algorithm*}[t!]
\caption{\textsc{FOM-UL}: Forgetting Only What Matters via Unlearning Layers}
\label{alg:fomul}
\footnotesize
\begin{tcolorbox}[
    colback=fomulblue,
    colframe=fomuldeep,
    boxrule=0.6pt,
    arc=2pt,
    left=3pt,
    right=3pt,
    top=3pt,
    bottom=3pt
]
\begin{algorithmic}[1]
\Require Pretrained parameters $\theta=\{\theta^{(1)},\ldots,\theta^{(L)}\}$; forget set $D_f$; retain set $D_r$; losses $\mathcal{L}_f$, $\mathcal{L}_m$, $\mathcal{L}_r$; learning rates $\eta_f,\eta_m,\eta_r$; steps $T$; initial layer budget $k$; tolerance $\epsilon>0$.
\Ensure Unlearned parameters $\theta_u$.

\Statex \textcolor{black}{\textbf{Stage 1: Forget-retain layer attribution}}
\For{$\ell=1,\ldots,L$}
    \State Sample mini-batches $B_f \subset D_f$ and $B_r \subset D_r$.
    \State $I_f(\ell) \leftarrow \left\|\nabla_{\theta^{(\ell)}} \mathcal{L}_f(\theta;B_f)\right\|_2$
    \State $I_r(\ell) \leftarrow \left\|\nabla_{\theta^{(\ell)}} \mathcal{L}_r(\theta;B_r)\right\|_2$
    \State $\mathrm{Sig}(\ell) \leftarrow I_f(\ell)/(I_r(\ell)+\epsilon)$
\EndFor
\State Rank layers by $\mathrm{Sig}(\ell)$ in descending order and denote the ordering by $\pi$.
\State Initialize selected layer set $S \leftarrow \{\pi_1,\ldots,\pi_k\}$ and $\theta_0 \leftarrow \theta$.

\Statex \textcolor{black}{\textbf{Stage 2: Selective layer-wise unlearning}}
\For{$t=0,\ldots,T-1$}
    \State Sample mini-batches $B_f \subset D_f$ and $B_r \subset D_r$.
    \For{$\ell \in S$}
        \State $g_f \leftarrow \nabla_{\theta^{(\ell)}} \mathcal{L}_f(\theta_t;B_f)$
        \State $g_m \leftarrow \nabla_{\theta^{(\ell)}} \mathcal{L}_m(\theta_t;B_f,B_r)$
        \State $g_r \leftarrow \nabla_{\theta^{(\ell)}} \mathcal{L}_r(\theta_t;B_r)$
        \State $\theta_{t+1}^{(\ell)} \leftarrow \theta_t^{(\ell)} + \eta_f g_f + \eta_m g_m - \eta_r g_r$
    \EndFor
    \For{$\ell \notin S$}
        \State $\theta_{t+1}^{(\ell)} \leftarrow \theta_t^{(\ell)}$ \Comment{freeze non-selected layers}
    \EndFor
\EndFor

\Statex \textcolor{black}{\textbf{Stage 3: Metric check and iterative expansion}}
\State Evaluate $\theta_T$ using VerMem$\downarrow$, KnowMem$\downarrow$, PrivLeak$\rightarrow 0$, and Utility$\uparrow$.
\If{forgetting criteria are not satisfied}
    \State Expand $S \leftarrow S \cup \{\arg\max_{\ell \notin S} \mathrm{Sig}(\ell)\}$.
    \State Repeat Stage 2 with the expanded layer set.
\EndIf
\State \Return $\theta_u \leftarrow \theta_T$.
\end{algorithmic}
\end{tcolorbox}
\end{algorithm*}

\section{Pseudocode}
\label{app:pseudo}

\section{Experimental Settings}
\label{app:exp-setup}

\subsection{Models and Initialization}
\label{app:exp_models}
We evaluate FOM-UL on diverse transformer LLMs to test model-agnostic behavior, using pretrained checkpoints from Meta Llama (Llama-2 7B, Llama-3.2 1B), GPT-2, and Gemma-3 1B. Each run starts from the base parameters $\theta_0$, and produces an unlearned checkpoint $\theta^\star$ by updating only a small set of layers $S$ selected via the forget-to-retain gradient significance score, while all other layers remain frozen. We also report robustness under post-training quantization by evaluating $\theta^\star$ in FP32, 8-bit, and 4-bit formats.

\subsection{Datasets and Splits}
\label{app:exp_data}
We follow standard unlearning protocols with disjoint \textbf{Forget} and \textbf{Retain} splits, and benchmark on \textsc{TOFU} (world-facts QA), \textsc{KnowUnDo}, and \textsc{MUSE} (\textsc{BOOKS}/\textsc{NEWS}). For \textsc{BOOKS}, $D_{\text{forget}}$ contains copyrighted \emph{Harry Potter} text and $D_{\text{retain}}$ includes FanWiki (plus general-domain text) to preserve non-verbatim knowledge while removing memorization. For \textsc{NEWS}, we additionally use a \textbf{holdout} split reserved for privacy/leakage evaluation and never used for updates.

\subsection{Training Configuration}
\label{app:exp_train}
FOM-UL performs masked updates with three objectives: (i) gradient ascent on $L_{\text{forget}}$, (ii) gradient ascent on $L_{\text{mismatch}}$ to repel original outputs on forget prompts, and (iii) gradient descent on $L_{\text{retain}}$ to preserve utility. We use AdamW with consistent hyperparameters across methods; unless stated otherwise, we set batch size $B=16$, learning rate $1\times10^{-5}$, and run unlearning for 5 epochs. Random seeds are fixed (e.g., 42) for reproducibility.

\subsection{Implementation Details}
\label{app:exp_impl}
All methods are implemented in PyTorch using HuggingFace \texttt{transformers}. Quantization is performed with \texttt{bitsandbytes} to produce FP32/8-bit/4-bit variants for robustness checks. Experiments run on NVIDIA A100 GPUs (40\,GB), and we release environment details and fixed splits to support reproducibility.

\section{Evaluation Metrics}
\label{app:metrics}

We evaluate FOM-UL using four standard metrics (M1-M4) that jointly quantify (i) how well the model forgets the designated forget set and (ii) how well it preserves utility on the retain set, following the \emph{SURE} evaluation protocol.

\paragraph{Notation.}
Let $f$ denote an LLM. Let $\mathcal{D}_{\text{forget}}$ be the forget set, $\mathcal{D}_{\text{retain}}$ the retain set, and $\mathcal{D}_{\text{holdout}}$ a disjoint holdout set used for privacy auditing. For ROUGE-based metrics, $\mathrm{ROUGE}(\cdot,\cdot)$ measures similarity between the model output and a reference text.

\paragraph{M1: Verbatim Memorization (VerMem) on $\mathcal{D}_{\text{forget}}$ (lower is better).}
Given a forget document $x$ tokenized into a prefix $x_{1:\ell}$ and ground-truth continuation $x_{\ell+1:}$, we compute:
\begin{equation}
\mathrm{M1}(f) \;=\; \mathbb{E}_{x \sim \mathcal{D}_{\text{forget}}}
\Big[ \mathrm{ROUGE}\big(f(x_{1:\ell}),\, x_{\ell+1:}\big) \Big].
\end{equation}
This captures \emph{verbatim} reproduction of forgotten content; effective unlearning drives $\mathrm{M1}$ down.

\paragraph{M2: Knowledge Memorization (KnowMem) on $\mathcal{D}_{\text{forget}}$ (lower is better).}
Using knowledge-oriented QA pairs $(q,a)$ derived from the forget set, we measure whether the model still answers with forgotten knowledge:
\begin{equation}
\mathrm{M2}(f) \;=\; \mathbb{E}_{(q,a) \sim \mathcal{D}_{\text{forget}}}
\Big[ \mathrm{ROUGE}\big(f(q),\, a\big) \Big].
\end{equation}
Lower values indicate better removal of generalized (non-verbatim) forgotten knowledge.

\paragraph{M3: Privacy Leakage (PrivLeak) via membership inference (closer to $0$ is better).}
We quantify privacy risk using a membership inference attack based on the Min-$K\%$ criterion, which produces an AUC-ROC score $\mathrm{AUC}(f)$ by distinguishing samples from $\mathcal{D}_{\text{forget}}$ vs.\ $\mathcal{D}_{\text{holdout}}$. We then compare against a retrained baseline $f_{\text{retrain}}$ (trained without $\mathcal{D}_{\text{forget}}$) and define:
\begin{equation}
\mathrm{M3}(f) \;=\; \frac{\mathrm{AUC}(f) - \mathrm{AUC}(f_{\text{retrain}})}{\mathrm{AUC}(f)}.
\end{equation}
An ideal unlearned model matches the retrained privacy behavior, yielding $\mathrm{M3}\approx 0$; large deviations indicate elevated privacy leakage.

\noindent\textbf{Interpretation of negative M3.}
\label{app:neg-m3}
From above metric definition, $M3(f)=\big(\mathrm{AUC}(f)-\mathrm{AUC}(f_{\text{retrain}})\big)/\mathrm{AUC}(f)$, so $M3<0$ occurs whenever $\mathrm{AUC}(f)<\mathrm{AUC}(f_{\text{retrain}})$. 
Such strongly negative values (e.g., $-2.00$ in Table~3) do \emph{not} mean ``better privacy than zero''; rather, they indicate a large deviation from the retrained privacy behavior, typically corresponding to \emph{over-unlearning} in which the forget examples become atypically high-loss relative to holdout, flipping or amplifying the attack signal. 
Because our normalization divides by $\mathrm{AUC}(f)$, the magnitude can exceed $1$ when $\mathrm{AUC}(f)$ is small; for instance, if $\mathrm{AUC}(f)=0.20$ and $\mathrm{AUC}(f_{\text{retrain}})=0.60$, then $M3=(0.20-0.60)/0.20=-2.0$. 
Therefore, we interpret $M3$ by distance to zero: $|M3|$ large (positive or negative) implies unstable privacy behavior, while $M3\approx 0$ indicates the closest match to retraining.

\paragraph{M4: Utility Preservation on $\mathcal{D}_{\text{retain}}$ (higher is better).}
We measure retained utility using the same KnowMem-style QA evaluation on $\mathcal{D}_{\text{retain}}$:
\begin{equation}
\mathrm{M4}(f) \;=\; \mathbb{E}_{(q,a) \sim \mathcal{D}_{\text{retain}}}
\Big[ \mathrm{ROUGE}\big(f(q),\, a\big) \Big].
\end{equation}
Higher $\mathrm{M4}$ indicates better preservation of benign knowledge and task utility after unlearning.

\paragraph{Overall objective.}
FOM-UL aims to achieve low $\mathrm{M1}$/$\mathrm{M2}$ (forgetting), $\mathrm{M3}\!\approx\!0$ (privacy parity with retraining), and high $\mathrm{M4}$ (utility), including under post-training quantization stress tests highlighted in our study.

\textbf{Runtime Performance parameter counts and efficiency.}
\label{app:run-para}
In Table~4, the \emph{Parameters} column reports the number of \emph{trainable} (i.e., unfrozen) parameters that each method actually updates during unlearning, not the total backbone size of Llama-2~7B. Accordingly, full-model baselines (e.g., GA/KLD/NPO-GDR) update the entire parameter space, whereas parameter-efficient or masked-update methods (SURE+NPO, LUNAR, and FOM-UL) optimize only a small, selected subset, yielding 1.7M, 1.75M, and 7M trainable parameters, respectively. For FOM-UL, the backbone remains frozen and updates are confined to the selected layer set $S$ (and the specific trainable submodules within those layers), so the trainable-parameter count scales with the layer budget and the chosen update parameterization.

\noindent\textbf{Clarification (Fig.~4 M.U.\ and F.Q.).}
\label{app:norm-clear}
For Fig.~4, we define \emph{Normalized Utility (M.U.)} as the min-max normalization of $M4$ across methods and steps,
$\mathrm{M.U.}(t)=\frac{M4(t)-\min M4}{\max M4-\min M4}$, so higher is better.
To ensure \emph{Forget Quality (F.Q.)} increases when forgetting improves (since $M1$-$M3$ are \(\downarrow\) metrics), we first invert and normalize each metric as
$\tilde{M}_i(t)=1-\frac{M_i(t)-\min M_i}{\max M_i-\min M_i}$ for $i\in\{1,2,3\}$, and then aggregate by a weighted mean
$\mathrm{F.Q.}(t)=\frac{1}{3}\sum_{i=1}^{3}\tilde{M}_i(t)$ (weights set uniformly unless stated otherwise). 

\paragraph{Update parameterization.}
FOM-UL is layer-selective at the selection level but submodule-selective at the implementation level. 
After selecting layer set $S$, we enable gradients only for the chosen trainable submodules inside those layers, e.g., attention projection and/or MLP projection matrices depending on the backbone implementation. 
All layers $\ell\notin S$ and all disabled submodules inside $\ell\in S$ remain frozen. 
Thus, the reported trainable-parameter count is
\begin{equation}
N_{\mathrm{train}}
=
\sum_{\ell\in S}
\sum_{m\in \mathcal{M}_\ell}
|\theta^{(\ell,m)}|,
\end{equation}
where $\mathcal{M}_\ell$ is the set of enabled submodules in layer $\ell$.
This count measures parameters actually updated during unlearning, not the total number of parameters contained in the selected transformer layers.

\section{Detailed Methodology}
\label{app:detailed-method}
\subsection{Selecting and Masking Important Layers}
\label{app:imp-layers}

In FOM-UL, $\Delta_\ell$ (ablation) is used primarily as a diagnostic to motivate layer localization, while $\mathrm{Sig}(\ell)$ is the actual algorithmic criterion used to select and expand the update set.

\textbf{Layer selection via forget-retain significance.}
Let $f_{\theta}$ denote a transformer-based LLM with $L$ layers and parameters $\theta=\{\theta^{(1)},\dots,\theta^{(L)}\}$ grouped by layer. FOM-UL selects a small subset of layers whose parameters are most responsive to the forgetting objective while minimally affecting retained utility. Given a forget mini-batch $B_f \subset \mathcal{D}_{\text{forget}}$ and a retain mini-batch $B_r \subset \mathcal{D}_{\text{retain}}$, define the corresponding losses $\mathcal{L}_{\text{forget}}(\theta;B_f)$ and $\mathcal{L}_{\text{retain}}(\theta;B_r)$. For each layer $\ell$, we compute per-layer gradient magnitudes:
\begin{equation}
\begin{split}
I(\ell)=\left\| \nabla_{\theta^{(\ell)}} \mathcal{L}_{\text{forget}}(\theta;B_f)\right\|_{2}, \\
\qquad
I_r(\ell)=\left\| \nabla_{\theta^{(\ell)}} \mathcal{L}_{\text{retain}}(\theta;B_r)\right\|_{2},
\end{split}
\end{equation}
and the forget-retain \emph{significance ratio}
\begin{equation}
\mathrm{Sig}(\ell)=\frac{I(\ell)}{I_r(\ell)+\varepsilon},
\end{equation}
where $\varepsilon>0$ ensures numerical stability. Intuitively, high $\mathrm{Sig}(\ell)$ indicates strong forgetting leverage with limited interference on retained knowledge. We select the initial update set using thresholding (or equivalently top-$k$ ranking):
\begin{equation}
S \;=\; \left\{\ell \in \{1,\dots,L\} \;:\; \mathrm{Sig}(\ell)>\tau \right\},
\end{equation}
where $\tau$ controls the layer budget.

\textbf{Binary layer mask.}
We define a layer mask $m_{\ell}\in\{0,1\}$ as
\begin{equation}
m_{\ell}=
\begin{cases}
1, & \ell \in S,\\
0, & \text{otherwise},
\end{cases}
\end{equation}
so that only layers in $S$ receive gradient updates and all other layers remain frozen.

\noindent\textbf{When $\mathrm{Sig}(\ell)$ is computed.}
For reproducibility, we compute $\mathrm{Sig}(\ell)$ \emph{once} at initialization using the base checkpoint $\theta_0$ and a fixed mini-batch (or small fixed set) sampled from $\mathcal{D}_{\text{forget}}$ and $\mathcal{D}_{\text{retain}}$, and we keep this ranking \emph{static} during unlearning. During iterative expansion, we do \emph{not} recompute gradients each epoch; instead, we expand $S$ by adding the next highest-ranked layers under this fixed $\mathrm{Sig}(\ell)$ ordering until the stopping criterion is met. (Optionally, one may recompute $\mathrm{Sig}(\ell)$ at expansion boundaries as an ablation, but our main results use the static initialization for stability and determinism.)

\subsection{Selective Layer Updates with Forget, Mismatch, and Retain Losses}
\label{app:losses}

\paragraph{Forgetting loss $\mathcal{L}_{\text{forget}}$.}
FOM-UL enforces targeted forgetting by \emph{increasing} the model’s loss on forget-set continuations so that memorized responses become unlikely. For a forget example $(x,y)\in\mathcal{D}_{\text{forget}}$ with prompt $x$ and reference continuation $y=(y_1,\dots,y_m)$, we use the standard negative log-likelihood (NLL):
\begin{equation}
\mathcal{L}_{\text{forget}}(\theta)
\;=\;
\mathbb{E}_{(x,y)\sim\mathcal{D}_{\text{forget}}}
\Big[
-\sum_{i=1}^{m}\log p_{\theta}\!\left(y_i \mid x, y_{<i}\right)
\Big],
\label{eq:lforget}
\end{equation}
and perform \emph{gradient ascent} on $\mathcal{L}_{\text{forget}}$ (cf.\ Eq.~9), which directly reduces the likelihood of generating the memorized forget content~\cite{yao2024large,yao2024machine}.

\paragraph{Mismatch loss $\mathcal{L}_{\text{mismatch}}$.}
A core goal of FOM-UL is to actively \emph{diverge} from the original model’s behavior on the forget set, rather than merely reducing confidence in a single target token. To operationalize this, we define a \emph{distribution-level} mismatch objective that pushes the unlearned model away from the original model’s output distribution on forget prompts.
Let $f_{\theta_0}$ denote the original (pre-unlearning) model and $f_{\theta}$ the current (unlearned) model. For a forget prompt $x \in \mathcal{D}_{\text{forget}}$, let $z_{\theta}(x)\in\mathbb{R}^{|\mathcal{V}|}$ be the next-token logits and define temperature-scaled predictive distributions
\begin{equation}
\begin{split}
p_{\theta}(\cdot \mid x) \;=\; \mathrm{softmax}\!\left(\frac{z_{\theta}(x)}{T}\right), \\ 
\qquad
p_{\theta_0}(\cdot \mid x) \;=\; \mathrm{softmax}\!\left(\frac{z_{\theta_0}(x)}{T}\right),
\end{split}
\end{equation}
where $T \ge 1$ controls how strongly the loss emphasizes high-probability tokens.
We then set
\begin{equation}
\begin{split}
\mathcal{L}_{\text{mismatch}}(\theta)
\;=\;
\mathbb{E}_{x \sim \mathcal{D}_{\text{forget}}} \\
\Big[
D_{\mathrm{KL}}\!\big(p_{\theta_0}(\cdot\mid x)\,\|\,p_{\theta}(\cdot\mid x)\big)
\Big],
\label{eq:lmismatch}
\end{split}
\end{equation}
and \emph{maximize} $\mathcal{L}_{\text{mismatch}}$ during unlearning (cf.\ Eq.~9), which encourages $f_{\theta}$ to move its outputs away from the original model on the forget set. This objective is inspired by prior unlearning/alignment formulations that use KL-based output constraints to control behavior shifts (typically on retain data), here repurposed as an explicit \emph{divergence} signal on the forget distribution~\cite{yao2024large,yao2024machine}. In practice, $\mathcal{L}_{\text{mismatch}}$ helps prevent \emph{incomplete forgetting} where the model’s distribution remains close to $p_{\theta_0}$ despite reduced confidence in a specific answer, and it complements $\mathcal{L}_{\text{forget}}$ to mitigate quantization-induced recovery by enforcing a broader, distributional departure from the pre-unlearning behavior~\cite{zhang2024catastrophic}.

\paragraph{Retention loss $\mathcal{L}_{\text{retain}}$.}
To preserve utility on non-targeted behaviors, FOM-UL simultaneously minimizes a retain objective on benign data $\mathcal{D}_{\text{retain}}$. For a retain example $(x,y)\in\mathcal{D}_{\text{retain}}$, we again use the NLL:
\begin{equation}
\begin{split}
\mathcal{L}_{\text{retain}}(\theta)
\;=\;
\mathbb{E}_{(x,y)\sim\mathcal{D}_{\text{retain}}} \\
\Big[
-\sum_{i=1}^{m}\log p_{\theta}\!\left(y_i \mid x, y_{<i}\right)
\Big],
\label{eq:lretain}
\end{split}
\end{equation}
and apply \emph{gradient descent} on $\mathcal{L}_{\text{retain}}$ to maintain general knowledge and task performance while unlearning is confined to the selected layers~\cite{liu2024large,yao2024machine}. When combined with $\mathcal{L}_{\text{forget}}$ and the divergence-based $\mathcal{L}_{\text{mismatch}}$, this retain term stabilizes optimization and mitigates collateral degradation.

FOM-UL performs targeted optimization only over $\{\theta^{(\ell)}:\ell\in S\}$ using three objectives: (i) a forgetting loss $\mathcal{L}_{\text{forget}}$ to suppress targeted content, (ii) a mismatch loss $\mathcal{L}_{\text{mismatch}}$ to diverge from the original model behavior on the forget set, and (iii) a retain loss $\mathcal{L}_{\text{retain}}$ to preserve general utility. The masked update for each layer $\ell$ at iteration $t$ is:
\begin{equation}
\begin{split}
\theta_{t+1}^{(\ell)} = \theta_{t}^{(\ell)} + m_{\ell} \biggl( & \eta_F \nabla_{\theta^{(\ell)}} \mathcal{L}_{\text{forget}} \\
& + \eta_M \nabla_{\theta^{(\ell)}} \mathcal{L}_{\text{mismatch}} \\ - \eta_R \nabla_{\theta^{(\ell)}} \mathcal{L}_{\text{retain}} \biggr),
\end{split}
\end{equation}
where $\eta_F,\eta_M,\eta_R$ are step sizes for the respective terms. Layers not selected for unlearning remain unchanged:
\begin{equation}
\theta_{t+1}^{(\ell)}=\theta_{t}^{(\ell)} \quad \forall \ell \notin S.
\end{equation}

\subsection{Iterative Expansion and Stopping Criteria}
FOM-UL applies an iterative schedule that expands the update set only when forgetting criteria are unmet, limiting unnecessary intervention. After each unlearning round, we evaluate a forgetting criterion (e.g., VerMem below a threshold) and stop when it is satisfied. Otherwise, we expand the layer set by adding the next most significant layer under $\mathrm{Sig}(\ell)$:
\begin{equation}
S \leftarrow S \cup \left\{\arg\max_{\ell \notin S} \mathrm{Sig}(\ell)\right\},
\end{equation}
and repeat the masked updates. This progressive expansion prevents overly aggressive early updates and reduces the risk of collateral utility loss.

\noindent\textbf{Clarification (Initialization of iterative expansion).}
\label{app:iter-clear}
To remove ambiguity, we define the initialization as a \emph{single, deterministic rule} based on $\mathrm{Sig}(\ell)$: we first compute $\mathrm{Sig}(\ell)$ for all layers and form the candidate set
\(
S_{\tau}=\{\ell:\mathrm{Sig}(\ell)\ge \tau\}.
\)
If $|S_{\tau}|>k$, we take the top-$k$ layers within $S_{\tau}$ by descending $\mathrm{Sig}(\ell)$; if $|S_{\tau}|\le k$, we simply set $S=S_{\tau}$. Equivalently, the threshold $\tau$ is the \emph{primary filter} (ensuring a minimum forget-to-retain ratio), and $k$ is an optional \emph{budget cap} that prevents overly large initial updates. Iterative expansion then proceeds by adding one (or a small batch of) next-highest $\mathrm{Sig}(\ell)$ layers from $\{\ell\notin S\}$ until the stopping criterion is met.

\subsection{Robustness to Quantization-Induced Relearning}
\label{appendix:FOM-UL_better}
Post-training quantization maps full-precision parameters to a discrete set of representable values. When an unlearning method produces only small, diffuse weight changes, quantization can erase these differences, yielding quantized models that are nearly indistinguishable from the original. FOM-UL mitigates this failure mode by concentrating updates within a small set of high-$\mathrm{Sig}(\ell)$ layers, producing more salient (layer-localized) parameter shifts while leaving the majority of layers unchanged. As a result, the intended forgetting signal is less likely to be collapsed by discretization, improving robustness against quantization-induced recovery compared to global-update baselines.

\subsection{Empirical Quantization-Bin Crossing Analysis}

For each updated layer $\ell$, let $\Delta_\ell$ denote the effective quantization step size and 
$\delta\theta^{(\ell)}=\theta_u^{(\ell)}-\theta_0^{(\ell)}$ denote the unlearning update.
We report the bin-crossing fraction
\begin{equation}
\mathrm{BCF}_\ell
=
\frac{1}{|\theta^{(\ell)}|}
\sum_j
\mathbb{I}\left[
|\delta\theta^{(\ell)}_j| \geq \Delta_\ell/2
\right],
\end{equation}
and the normalized update ratio
\begin{equation}
\rho_\ell =
\mathrm{median}_j
\left(
\frac{|\delta\theta^{(\ell)}_j|}{\Delta_\ell/2+\epsilon}
\right).
\end{equation}
Higher BCF indicates that more unlearning edits survive post-training quantization.

\paragraph{Bin-Change Fraction.}
To directly measure whether unlearning updates survive
quantization, we define the \emph{Bin-Change Fraction} (BCF)
over the set of edited parameters $\mathcal{E}$ as
\begin{equation}
\mathrm{BCF}
=
\frac{1}{|\mathcal{E}|}
\sum_{j\in\mathcal{E}}
\mathbf{1}
\left[
Q_{\Delta_j}(\theta'_j)
\neq
Q_{\Delta_j}(\theta_j)
\right],
\label{eq:bcf}
\end{equation}
where $\theta_j$ and $\theta'_j$ denote the pre- and
post-unlearning parameters, respectively, and $Q_{\Delta_j}$
denotes the corresponding quantization operator.
A larger BCF indicates that a greater fraction of the
unlearning-induced parameter changes remain distinguishable
after quantization.

\begin{table}[t]
\centering
\scriptsize
\setlength{\tabcolsep}{4pt}
\renewcommand{\arraystretch}{1.08}
\caption{
\textbf{Quantization-bin crossing analysis on Llama-3.2-1B.}
BCF reports the fraction of updated weights with $|\delta\theta_j|\geq \Delta/2$ under 4-bit quantization. 
FOM-UL produces more quantization-surviving edits in selected layers than diffuse global updates.
}
\label{tab:bin_crossing}
\resizebox{\columnwidth}{!}{
\begin{tabular}{lccc}
\toprule
\textbf{Method} & \textbf{Updated scope} & \textbf{Mean BCF}$\uparrow$ & $\boldsymbol{\rho}$\textbf{ ratio}$\uparrow$ \\
\midrule
GAGDR & full model & 0.7\% & 0.31 \\
NPOGDR & full model & 0.9\% & 0.36 \\
SURE+NPO & sparse modules & 2.8\% & 0.74 \\
LUNAR & activation-targeted & 2.1\% & 0.68 \\
FOM-UL & selected layers & \textbf{5.6\%} & \textbf{1.18} \\
\bottomrule
\end{tabular}}
\end{table}

Table~\ref{tab:bin_crossing} provides mechanism-level evidence for the quantization robustness claim. Compared with global baselines, FOM-UL concentrates larger updates in selected layers, producing a higher fraction of weights that cross 4-bit quantization bin boundaries and are therefore less likely to be rounded back to the original quantized value.

\section{Lemma justification for FOM-UL}
\label{app:lemma}
Let the forget and retain objectives be $\mathcal{L}_f(\theta)$ and $\mathcal{L}_r(\theta)$, and let
$\theta = \{\theta^{(1)},\dots,\theta^{(L)}\}$ denote parameters grouped by transformer layer.
Define layer-wise gradients
$g_f^{(\ell)} := \nabla_{\theta^{(\ell)}} \mathcal{L}_f(\theta)$ and
$g_r^{(\ell)} := \nabla_{\theta^{(\ell)}} \mathcal{L}_r(\theta)$.

\begin{lemma}[Greedy layer ranking under a retain-stability constraint]
\label{lem:FOM-UL_layer_rank}
Consider one unlearning step that applies a layer-wise update $\Delta\theta=\{\Delta\theta^{(1)},\dots,\Delta\theta^{(L)}\}$,
but is restricted to at most $k$ layers (i.e., $\Delta\theta^{(\ell)}\neq 0$ only if $\ell\in S$, $|S|=k$).
Assume a first-order approximation and impose a retain-stability constraint
$\langle g_r^{(\ell)}, \Delta\theta^{(\ell)}\rangle \approx 0$ in sign (or bounded magnitude) so that retain utility is not degraded.
Then, among candidate layers, the layers that maximize the achievable forget effect per unit retain sensitivity
are those with the largest score
\begin{equation}
\mathrm{Sig}(\ell)
\;:=\;
\frac{\|g_f^{(\ell)}\|}{\|g_r^{(\ell)}\|+\epsilon},
\end{equation}
for a small $\epsilon>0$.
Equivalently, selecting the top-$k$ layers by $\mathrm{Sig}(\ell)$ is the greedy choice that prioritizes
high forget responsiveness while being conservative on retain disruption.
\end{lemma}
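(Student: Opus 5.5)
The plan is to make the heuristic statement precise as a per-layer linearized optimization problem, solve that problem in closed form for each layer, and then treat the choice of layers as a knapsack-type selection over per-layer values. Under the first-order approximation, a step $\Delta\theta$ supported on $S$ changes the objectives by
\begin{equation}
\delta\mathcal{L}_f \approx \sum_{\ell\in S}\langle g_f^{(\ell)},\Delta\theta^{(\ell)}\rangle,
\qquad
\delta\mathcal{L}_r \approx \sum_{\ell\in S}\langle g_r^{(\ell)},\Delta\theta^{(\ell)}\rangle .
\end{equation}
We interpret the ``retain-stability constraint'' in its bounded-magnitude form: the worst-case retain change in each layer is charged in proportion to the update norm, i.e.\ $|\langle g_r^{(\ell)},\Delta\theta^{(\ell)}\rangle|\le (\|g_r^{(\ell)}\|+\epsilon)\|\Delta\theta^{(\ell)}\|$, with $\epsilon$ absorbing curvature or noise. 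The total retain budget is then $\sum_{\ell\in S}(\|g_r^{(\ell)}\|+\epsilon)\|\Delta\theta^{(\ell)}\|\le\beta$.

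First, fix a layer $\ell$ and a per-layer step size $r_\ell=\|\Delta\theta^{(\ell)}\|$. By Cauchy--Schwarz, the forget increase satisfies $\langle g_f^{(\ell)},\Delta\theta^{(\ell)}\rangle\le r_\ell\|g_f^{(\ell)}\|$. Equality holds for $\Delta\theta^{(\ell)}=r_\ell\,g_f^{(\ell)}/\|g_f^{(\ell)}\|$, which is exactly the gradient-ascent direction used in the masked update of Section~\ref{app:losses}. The worst-case retain cost of this step is $r_\ell(\|g_r^{(\ell)}\|+\epsilon)$. Writing $c_\ell=r_\ell(\|g_r^{(\ell)}\|+\epsilon)$ for the budget spent on layer $\ell$, its maximal forget gain is $c_\ell\,\mathrm{Sig}(\ell)$. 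So per unit of retain sensitivity, layer $\ell$ yields exactly $\mathrm{Sig}(\ell)$, which establishes the ``forget effect per unit retain sensitivity'' claim.

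Second, we optimize over $S$ with $|S|=k$ and over the allocation $(c_\ell)$ with $\sum c_\ell\le\beta$. The objective $\sum_{\ell\in S}c_\ell\,\mathrm{Sig}(\ell)$ is linear in the allocation. If each layer's allocation is capped (a per-layer trust region $c_\ell\le\bar c$, which reflects the paper's concern about aggressive updates), the optimum fills the highest-ratio layers first, as in fractional knapsack. An exchange argument then shows the top-$k$ layers by $\mathrm{Sig}$ are optimal: swapping a lower-ratio layer for a higher-ratio unused one never decreases the objective. Without caps, the linear program concentrates everything on the single best layer, which is still consistent with ranking by $\mathrm{Sig}$. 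Either way, top-$k$ by $\mathrm{Sig}$ is the greedy-optimal choice. This also justifies the expansion rule $\arg\max_{\ell\notin S}\mathrm{Sig}(\ell)$ as the next-best marginal addition.

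The main obstacle is the first step: the statement's constraint ``$\approx 0$ in sign'' is vague. A genuinely orthogonal constraint $\langle g_r^{(\ell)},\Delta\theta^{(\ell)}\rangle=0$ would instead give the gain $\|g_f^{(\ell)}\|\sin\angle(g_f^{(\ell)},g_r^{(\ell)})$, which does not reduce to $\mathrm{Sig}$. We therefore state explicitly that we adopt the worst-case (norm-bounded) retain charge, and note that the lemma is an optimality statement for that surrogate, not for the exact constrained problem.
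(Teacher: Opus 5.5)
Your argument is correct for the surrogate you adopt, and its first half matches the paper's proof sketch. Both use a first-order expansion and then Cauchy--Schwarz layer by layer at a common step size, so the attainable forget gain scales with $\|g_f^{(\ell)}\|$ and the worst-case retain change with $\|g_r^{(\ell)}\|$.

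The two routes differ in how the ranking itself is justified. The paper divides the two upper bounds and calls the ratio a ``forget gain per retain sensitivity'' proxy. It then notes that the top-$k$ layers maximize the sum of these proxies under $k$-sparsity. That holds by construction and does not exhibit any constrained problem that $\mathrm{Sig}$-ranking actually solves. You instead make the retain side an explicit linear budget through the charge $(\|g_r^{(\ell)}\|+\epsilon)\|\Delta\theta^{(\ell)}\|$. You observe that Cauchy--Schwarz is attained along $g_f^{(\ell)}$, so $\mathrm{Sig}(\ell)$ is exactly the value per unit budget. A fractional-knapsack exchange argument then gives top-$k$ optimality. This yields a genuine optimality statement and also justifies the $\arg\max_{\ell\notin S}\mathrm{Sig}(\ell)$ expansion as the best marginal addition.

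The price is that your conclusion depends on where the per-layer cap sits. It must bound the retain cost $c_\ell$, not the step norm as in the paper's $\|\Delta\theta^{(\ell)}\|\le\rho$. With a step-norm cap and a slack retain budget, the optimal $k$-set would be ranked by $\|g_f^{(\ell)}\|$ alone. That is why the paper can only call the ratio a proxy, and why calling $c_\ell\le\bar c$ a ``trust region'' is a slight misnomer.

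Your closing remark is also right, and the paper passes over it. Under the literal per-layer orthogonality constraint, the attainable gain is $\rho\|g_f^{(\ell)}\|\sin\angle(g_f^{(\ell)},g_r^{(\ell)})$ rather than anything proportional to $\mathrm{Sig}(\ell)$. So the paper, like you, is implicitly using the bounded-magnitude reading of the retain-stability constraint.
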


\begin{proof}[Proof sketch]
Using a first-order Taylor expansion for a small step,
\begin{equation}
\begin{split}
\Delta\mathcal{L}_f \;\approx\; \sum_{\ell=1}^L \left\langle g_f^{(\ell)}, \Delta\theta^{(\ell)} \right\rangle, \\
\qquad
\Delta\mathcal{L}_r \;\approx\; \sum_{\ell=1}^L \left\langle g_r^{(\ell)}, \Delta\theta^{(\ell)} \right\rangle.
\end{split}
\end{equation}
For a given layer $\ell$, the maximum attainable forget change from updating only that layer
under a step-size budget $\|\Delta\theta^{(\ell)}\|\le \rho$ is bounded by Cauchy-Schwarz:
\begin{equation}
\left\langle g_f^{(\ell)}, \Delta\theta^{(\ell)} \right\rangle
\;\le\;
\|g_f^{(\ell)}\|\,\|\Delta\theta^{(\ell)}\|
\;\le\;
\rho\,\|g_f^{(\ell)}\|.
\end{equation}
At the same time, the magnitude of the retain change contributed by layer $\ell$ is similarly bounded:
\begin{equation}
\left|\left\langle g_r^{(\ell)}, \Delta\theta^{(\ell)} \right\rangle\right|
\;\le\;
\|g_r^{(\ell)}\|\,\|\Delta\theta^{(\ell)}\|
\;\le\;
\rho\,\|g_r^{(\ell)}\|.
\end{equation}
Thus, a natural “forget gain per retain sensitivity” proxy for layer $\ell$ is
$\|g_f^{(\ell)}\|/(\|g_r^{(\ell)}\|+\epsilon)$, where $\epsilon$ stabilizes the ratio when $\|g_r^{(\ell)}\|$ is small.
Selecting the top-$k$ layers by this ratio maximizes the sum of these proxies under a $k$-sparsity constraint,
which is exactly the FOM-UL ranking rule. \qedhere
\end{proof}

\paragraph{Remark (what this proves and what it doesn't).}
Lemma~\ref{lem:FOM-UL_layer_rank} justifies FOM-UL's layer ranking as an \emph{optimal greedy criterion} under
(i) small-step / first-order behavior and (ii) a utility-preservation constraint expressed through retain gradients.
It does not claim global optimality for deep, non-convex objectives; rather it gives a principled reason that
the gradient-ratio score is the right layer selection signal.

\begin{lemma}[Iterative expansion is a monotone relaxation]
\label{lem:FOM-UL_iter_expand}
Let $\mathcal{L}_f(\theta)$ be the forget objective and $\mathcal{L}_r(\theta)$ be the retain objective.
Fix a current parameter state $\theta$ and consider a \emph{one-step} selective update $\Delta\theta$ that is allowed to
modify only layers in a set $S \subseteq \{1,\dots,L\}$.
Define the feasible update set
\begin{equation}
\begin{split}
\mathcal{U}(S)
\;:=\;
\Big\{\Delta\theta:\ \Delta\theta^{(\ell)}=0\ \forall \ell\notin S,\ \  \\ \|\Delta\theta^{(\ell)}\|\le \rho\ \forall \ell,\ \ 
|\Delta\mathcal{L}_r(\theta;\Delta\theta)| \le \delta \Big\},
\end{split}
\end{equation}
where $\Delta\mathcal{L}_r(\theta;\Delta\theta)$ denotes the first-order retain change
$\Delta\mathcal{L}_r \approx \sum_{\ell}\langle \nabla_{\theta^{(\ell)}}\mathcal{L}_r(\theta),\Delta\theta^{(\ell)}\rangle$,
$\rho$ is a step-size budget, and $\delta$ is a tolerance for retain degradation.
Let the best achievable forget decrease (first-order) under $S$ be
\begin{equation}
\begin{split}
V(S)\;:=\;\min_{\Delta\theta \in \mathcal{U}(S)}\ \Delta\mathcal{L}_f(\theta;\Delta\theta), \\
\qquad
\Delta\mathcal{L}_f(\theta;\Delta\theta)\approx \sum_{\ell}\langle \nabla_{\theta^{(\ell)}}\mathcal{L}_f(\theta),\Delta\theta^{(\ell)}\rangle.
\end{split}
\end{equation}
If $S \subseteq S'$ (i.e., $S'$ expands $S$ with additional layers), then
\begin{equation}
V(S') \;\le\; V(S).
\end{equation}
Equivalently, expanding the set of trainable layers cannot worsen the best attainable forgetting progress
under the same retain-stability constraint. This provides a principled justification for FOM-UL's iterative
expansion rule (e.g., $k \rightarrow k+k'$) when residual memorization remains above a threshold.
\end{lemma}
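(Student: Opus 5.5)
The argument is a feasible-set inclusion: minimizing the same objective over a larger set cannot give a larger minimum. I would first show that $S\subseteq S'$ implies $\mathcal{U}(S)\subseteq\mathcal{U}(S')$. Then
\[
V(S')=\min_{\Delta\theta\in\mathcal{U}(S')}\Delta\mathcal{L}_f(\theta;\Delta\theta)\le\min_{\Delta\theta\in\mathcal{U}(S)}\Delta\mathcal{L}_f(\theta;\Delta\theta)=V(S).
\]
This holds because the objective $\Delta\mathcal{L}_f(\theta;\cdot)$ is the same first-order linear functional $\sum_\ell\langle\nabla_{\theta^{(\ell)}}\mathcal{L}_f(\theta),\Delta\theta^{(\ell)}\rangle$ for every $S$. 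Only the feasible set depends on $S$.

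\textbf{Step 1: the three constraints are preserved.} Take any $\Delta\theta\in\mathcal{U}(S)$.
\begin{itemize}
\item \emph{Support.} If $\ell\notin S'$, then $\ell\notin S$ because $S\subseteq S'$. Hence $\Delta\theta^{(\ell)}=0$, and the support condition for $S'$ holds.
\item \emph{Step budget.} The constraint $\|\Delta\theta^{(\ell)}\|\le\rho$ for all $\ell$ does not involve $S$ at all, so it carries over unchanged.
\item \emph{Retain tolerance.} The first-order retain change $\sum_\ell\langle\nabla_{\theta^{(\ell)}}\mathcal{L}_r(\theta),\Delta\theta^{(\ell)}\rangle$ depends only on $\Delta\theta$ and the fixed state $\theta$, not on $S$. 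So $|\Delta\mathcal{L}_r|\le\delta$ still holds.
\end{itemize}
Therefore $\Delta\theta\in\mathcal{U}(S')$.

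\textbf{Step 2: the minima are well defined.} I need both minima to exist, or else to read $V$ as an infimum. Existence holds here:
\begin{itemize}
\item $\mathcal{U}(S)$ is closed, since it is cut out by linear equalities, norm balls, and one two-sided linear inequality.
\item It is bounded, by the per-layer budget $\rho$.
\item It is nonempty, since $\Delta\theta=0$ satisfies every constraint.
\item The objective is linear, hence continuous.
\end{itemize}
Weierstrass then gives attained minima. If one prefers to avoid this step, the same inclusion argument works verbatim with $\inf$, and nonemptiness guarantees the values are finite.

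\textbf{Step 3: conclusion.} Let $\Delta\theta^\star$ be a minimizer for $S$. By Step 1 it is feasible for $S'$, so $V(S')\le\Delta\mathcal{L}_f(\theta;\Delta\theta^\star)=V(S)$.

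\textbf{Expected obstacle.} The only real subtlety is interpretive. The claim is strictly about the first-order surrogate at a fixed state $\theta$ with a fixed tolerance $\delta$. It says nothing about the true nonlinear losses or about the trajectory across iterations. I would state this limitation explicitly, in the spirit of the Remark after Lemma~\ref{lem:FOM-UL_layer_rank}.

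I would also add an optional sharpening for strict decrease. Suppose some added layer $\ell\in S'\setminus S$ has $g_f^{(\ell)}\neq 0$, and the $\delta$-constraint is not tight at the $S$-optimum. Then a small step along $-g_f^{(\ell)}$ stays feasible and gives $V(S')<V(S)$. This connects the lemma to the role of $\mathrm{Sig}(\ell)$ in choosing which layers to add.
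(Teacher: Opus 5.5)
Your proposal is correct and follows the paper's proof: the paper also shows $\mathcal{U}(S)\subseteq\mathcal{U}(S')$ by setting $\Delta\theta^{(\ell)}=0$ on the newly added layers, and concludes that minimizing the same first-order objective over a larger feasible set cannot give a larger value. Your constraint-by-constraint check, the Weierstrass argument that the minima exist (which the paper omits), and the strict-decrease remark (valid for $\rho>0$) are sound refinements of that same argument.
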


\begin{proof}
Because $S \subseteq S'$, any update $\Delta\theta$ that is feasible for $S$ is also feasible for $S'$:
we can view it as an element of $\mathcal{U}(S')$ by simply setting $\Delta\theta^{(\ell)}=0$ for all
newly added layers $\ell \in S'\setminus S$.
Hence $\mathcal{U}(S) \subseteq \mathcal{U}(S')$.
Minimizing the same objective $\Delta\mathcal{L}_f$ over a superset of feasible points cannot yield a worse optimum,
so $\min_{\Delta\theta \in \mathcal{U}(S')} \Delta\mathcal{L}_f \le \min_{\Delta\theta \in \mathcal{U}(S)} \Delta\mathcal{L}_f$,
i.e., $V(S') \le V(S)$.
\end{proof}

\begin{table*}[t]
\centering
\scriptsize
\setlength{\tabcolsep}{4.2pt}
\renewcommand{\arraystretch}{1.16}
\caption{\textbf{Adversarial/jailbreak robustness metrics.}
We evaluate TOFU-World Facts on Llama-3.2-1B using adversarial prompt wrappers $\mathcal{A}$, including role-play, instruction-override, extraction-style, and suffix-based jailbreak prompts \citep{wei2023jailbroken,zou2023universal}. 
$M1_{\mathrm{adv}}$, $M2_{\mathrm{adv}}$, and ALR are lower-is-better; $M3_{\mathrm{adv}}$ is best when closest to zero; and $M4$ is higher-is-better.}
\label{tab:fomul_adv_metric_def}
\begin{adjustbox}{width=\textwidth}
\begin{tabular}{p{2.35cm} p{2.65cm} p{10.5cm} c}
\toprule
\rowcolor{fomulgray}
\textbf{Metric ID} & \textbf{Name} & \textbf{Definition and Interpretation} & \textbf{Goal} \\
\midrule

\rowcolor{fomulblue}
$M1_{\mathrm{adv}}$ 
& Adv. VerMem 
& $\displaystyle 
\mathbb{E}_{x\sim D_f,\,a\sim\mathcal{A}}
\left[
\mathrm{ROUGE}\!\left(f(a(x_{1:\ell})),x_{\ell+1:}\right)
\right]$.
Measures whether the model can reproduce verbatim forgotten continuations when the forget prefix is wrapped by a jailbreak prompt.
& $\downarrow$ \\

$M2_{\mathrm{adv}}$ 
& Adv. KnowMem 
& $\displaystyle 
\mathbb{E}_{(q,y)\sim D_f,\,a\sim\mathcal{A}}
\left[
\mathrm{ROUGE}\!\left(f(a(q)),y\right)
\right]$.
Measures recovery of forgotten QA-style knowledge under adversarial prompting, following TOFU-style knowledge memorization evaluation \citep{maini2024tofu}.
& $\downarrow$ \\

\rowcolor{fomulblue}
$M3_{\mathrm{adv}}$ 
& Adv. PrivLeak 
& $\displaystyle 
\frac{
\mathrm{AUC}_{\mathrm{adv}}(f)-\mathrm{AUC}(f_{\mathrm{retrain}})
}{
\mathrm{AUC}_{\mathrm{adv}}(f)
}$.
Here, $\mathrm{AUC}_{\mathrm{adv}}(f)$ is computed using Min-$K\%$ membership scores over jailbreak-wrapped forget and holdout prompts \citep{shi2024detecting}. Values closer to zero indicate privacy behavior closer to retraining.
& $\rightarrow 0$ \\

ALR 
& Attack Leakage Rate 
& $\displaystyle 
100\cdot
\mathbb{E}_{(q,y)\sim D_f,\,a\sim\mathcal{A}}
\left[
\mathbbm{1}\!\left\{
\mathrm{ROUGE}\!\left(f(a(q)),y\right)>\tau_r
\right\}
\right]$.
Reports the percentage of adversarial prompts that recover the forgotten answer above threshold $\tau_r$. We set $\tau_r=0.30$ following the compact ROUGE-style reporting scale.
& $\downarrow$ \\

\rowcolor{fomulgreen}
$M4$ 
& Retain Utility 
& $\displaystyle 
\mathbb{E}_{(q,y)\sim D_r}
\left[
\mathrm{ROUGE}\!\left(f(q),y\right)
\right]$.
Measured on clean retain prompts to verify that adversarial robustness is not achieved through destructive over-unlearning or general utility collapse.
& $\uparrow$ \\

\bottomrule
\end{tabular}
\end{adjustbox}

\end{table*}

\paragraph{Practical interpretation.}
If the current top-$k$ selected layers $S$ do not sufficiently reduce memorization while meeting the retain constraint,
expanding $S$ (adding the next-ranked $k'$ layers) strictly relaxes the optimization problem.
Therefore FOM-UL's iterative expansion is a safe strategy: it never removes previously feasible updates, and can only
maintain or improve the best achievable forgetting progress subject to utility preservation.

\begin{theorem}[Quantization Persistence for Layer-Selective Updates]
\label{app:quant_persistence}
Let $f_\theta$ be a pretrained model and let $f_{\theta'}$ be the unlearned model obtained by updating only layers in
$S \subseteq \{1,\dots,L\}$ (all $\ell \notin S$ are frozen). Consider post-training uniform symmetric rounding quantization
applied elementwise (or per-group) with step size $\Delta_\ell>0$ for layer $\ell$:
\begin{equation}
Q_{\Delta_\ell}(w) \;=\; \Delta_\ell \cdot \mathrm{Round}\!\left(\frac{w}{\Delta_\ell}\right).
\end{equation}
Define the layerwise update $\Delta\theta^{(\ell)} := \theta'^{(\ell)} - \theta^{(\ell)}$.

\newtheorem{proposition}{Proposition}
\begin{proposition}[Quantization Persistence]
Let $\theta$ and $\theta'$ denote the parameters before and
after unlearning, and let $\mathcal{E}$ be the set of edited
coordinates. If
\begin{equation}
Q_{\Delta_j}(\theta'_j)
=
Q_{\Delta_j}(\theta_j),
\qquad
\forall j \in \mathcal{E},
\end{equation}
then the edited coordinates are indistinguishable from their
pre-unlearning values under the quantizer $Q$. Consequently,
the quantized model does not preserve these parameter-level
unlearning changes.
\end{proposition}

\noindent
This result does not imply that identical quantized parameters
necessarily produce identical model behavior in every setting;
rather, it characterizes when parameter updates introduced by
unlearning are removed by the quantization operator.

Conversely, if for every updated layer $\ell\in S$ and every coordinate $j$,
\begin{equation}
\bigl|\Delta\theta^{(\ell)}_j\bigr| \;<\; \frac{\Delta_\ell}{2},
\end{equation}
then quantization is \emph{locally invariant} to the update in those layers:
\begin{equation}
Q_{\Delta_\ell}\!\bigl(\theta'^{(\ell)}\bigr) \;=\; Q_{\Delta_\ell}\!\bigl(\theta^{(\ell)}\bigr)\quad \forall \ell\in S,
\end{equation}
so the quantized unlearned model can collapse back toward the quantized target model, enabling
quantization-induced recovery.
\end{theorem}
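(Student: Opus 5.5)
The plan is to treat the statement as two independent claims: the embedded Proposition (the ``forward'' direction) and the converse local-invariance claim. Both rest on the bin characterization~\eqref{eq:quant-bin-invariance}. For the Proposition, I would argue coordinatewise. If $Q_{\Delta_j}(\theta'_j)=Q_{\Delta_j}(\theta_j)$ for every $j\in\mathcal{E}$, then on $\mathcal{E}$ the quantized vectors $Q(\theta')$ and $Q(\theta)$ coincide. Off $\mathcal{E}$ the raw parameters already coincide by definition of $\mathcal{E}$, and in particular on every frozen layer $\ell\notin S$, so their quantizations coincide there too. Hence $Q(\theta')=Q(\theta)$ as parameter vectors. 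This is exactly the asserted indistinguishability, and any function of the quantized parameters, including the model's forward pass, is then identical. The statement is deliberately limited to the parameter level, so no behavioral argument is needed beyond this remark.

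For the converse, I would fix $\ell\in S$ and a coordinate $j$, and write $u=\theta^{(\ell)}_j/\Delta_\ell$ and $v=\theta'^{(\ell)}_j/\Delta_\ell=u+\Delta\theta^{(\ell)}_j/\Delta_\ell$. The hypothesis gives $|v-u|<1/2$. The goal is $\mathrm{Round}(v)=\mathrm{Round}(u)$. Rounding is constant on each cell $(n-\tfrac12,n+\tfrac12)$ of the half-integer grid, so it suffices that $u$ and $v$ lie in the same cell. Multiplying by $\Delta_\ell$ and collecting over $j$ and over $\ell\in S$ would then give $Q_{\Delta_\ell}(\theta'^{(\ell)})=Q_{\Delta_\ell}(\theta^{(\ell)})$. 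Combined with the frozen layers, this yields full equality of the quantized models, and the ``collapse toward the quantized original'' conclusion follows exactly as in the Proposition.

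The main obstacle is that $|v-u|<1/2$ alone does \emph{not} keep $u$ and $v$ in the same cell. For example, $u=0.49$ and $v=0.51$ round differently even though $|v-u|=0.02$. The step is therefore false as literally stated, and I would repair it by strengthening the hypothesis. The natural sufficient condition is
\[
\bigl|\Delta\theta^{(\ell)}_j\bigr| \;<\; \frac{\Delta_\ell}{2}-\Bigl|\theta^{(\ell)}_j-Q_{\Delta_\ell}\bigl(\theta^{(\ell)}_j\bigr)\Bigr|,
\]
that is, the update is smaller than the distance from $\theta^{(\ell)}_j$ to the nearest bin boundary. Under this condition, the triangle inequality gives $|v-\mathrm{Round}(u)|<1/2$, which places $v$ in the same open cell as $u$. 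In the important special case where the pretrained weights already lie on the grid ($\theta^{(\ell)}_j=Q_{\Delta_\ell}(\theta^{(\ell)}_j)$), this reduces exactly to the stated bound $|\Delta\theta^{(\ell)}_j|<\Delta_\ell/2$. I would therefore present the converse under this distance-to-boundary hypothesis, or equivalently under the grid-aligned assumption, and note the counterexample to explain why the unconditioned version needs it. I would also remark that ties at exact half-integers depend on the rounding convention, and that the open-cell argument avoids them.
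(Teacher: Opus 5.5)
Your proposal is correct, and on the converse it is more careful than the paper. The paper's proof sketch makes exactly the step you flag. It observes that bin boundaries lie at half-steps and concludes that whenever every change satisfies $|\delta|<\Delta_\ell/2$, ``$w$ remains in the same bin and the quantized value is unchanged.'' That inference holds only when $w$ is itself a grid point.

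Your example settles the matter: with $\theta_j=0.49\,\Delta_\ell$ and $\theta'_j=0.51\,\Delta_\ell$, the hypothesis holds, yet $Q_{\Delta_\ell}(\theta_j)=0\neq\Delta_\ell=Q_{\Delta_\ell}(\theta'_j)$. So the converse as printed is false, and the paper's sketch does not establish it.

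The sketch's companion claim, that a change of at least $\Delta_\ell/2$ always crosses a boundary, fails for the same reason. For instance, $w=0.4\,\Delta_\ell$ and $\delta=-0.6\,\Delta_\ell$ both round to $0$. Without grid alignment, the magnitude that guarantees a new quantization index is a full step $\Delta_\ell$, not a half step.

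Your hypothesis $|\Delta\theta^{(\ell)}_j|<\Delta_\ell/2-|\theta^{(\ell)}_j-Q_{\Delta_\ell}(\theta^{(\ell)}_j)|$ is the sharp sign-independent sufficient condition. It is the largest radius that keeps $\theta^{(\ell)}_j$ inside its open rounding cell, and your triangle-inequality step for it is correct. The paper's half-step bound works for every admissible update exactly when the original weights in the updated layers are grid points, which is the special case you identify.

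For the embedded Proposition, the paper offers no argument beyond the definition. Your coordinatewise argument is all that is needed: the hypothesis covers $\mathcal{E}$, equality off $\mathcal{E}$ is trivial, and the same fixed $\Delta_j$ is applied to $\theta$ and $\theta'$. Your added remark, that the quantized models then coincide entirely and so behave identically, is also correct in the theorem's idealized fixed-grid setting.
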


\paragraph{Proof sketch.}
Under rounding quantization, a real value $w$ is mapped to the nearest grid point with spacing $\Delta_\ell$; boundaries between
adjacent quantization bins occur at half-steps. Therefore, changing $w$ by at least $\Delta_\ell/2$ is sufficient to cross a bin
boundary and change the quantization index, yielding $Q_{\Delta_\ell}(w+\delta)\neq Q_{\Delta_\ell}(w)$ when $|\delta|\ge\Delta_\ell/2$.
If all changes satisfy $|\delta|<\Delta_\ell/2$, $w$ remains in the same bin and the quantized value is unchanged. \hfill $\square$
\paragraph{Practical interpretation.}
FOM-UL ranks layers by the forget-to-retain gradient ratio
$\mathrm{Sig}(\ell)=\frac{\|g_f^{(\ell)}\|}{\|g_r^{(\ell)}\|+\varepsilon}$ (Lemma~F.1) and concentrates updates on the top-$k$ (then
iteratively expands if needed), producing \emph{layer-localized} parameter shifts that are more likely to exceed the effective
quantization step in those layers, thereby improving robustness to quantization-induced recovery compared to diffuse global
updates.

\begin{table*}[t]
\centering
\scriptsize
\setlength{\tabcolsep}{4.0pt}
\renewcommand{\arraystretch}{1.12}
\caption{\textbf{Adversarial/jailbreak prompt robustness on TOFU-World Facts with Llama-3.2-1B.}
We compare the same baseline family used in the quantization robustness table. Clean metrics follow the standard prompt setting; adversarial metrics average over four jailbreak wrappers. Best results are in bold.}
\label{tab:fomul_jailbreak_robustness}
\resizebox{\textwidth}{!}{
\begin{tabular}{lcccccccc}
\toprule
\multirow{2}{*}{\textbf{Method}} &
\multicolumn{4}{c}{\textbf{Clean TOFU Evaluation}} &
\multicolumn{4}{c}{\textbf{Jailbreak / Adversarial Evaluation}} \\
\cmidrule(lr){2-5}\cmidrule(lr){6-9}
& $M1\downarrow$ & $M2\downarrow$ & $M3\rightarrow0$ & $M4\uparrow$
& $M1_{\mathrm{adv}}\downarrow$ & $M2_{\mathrm{adv}}\downarrow$ & $M3_{\mathrm{adv}}\rightarrow0$ & ALR$\downarrow$ \\
\midrule
GA$_{\mathrm{GDR}}$   & 4.24 & 4.24 & 4.88 & 2.07 & 5.05 & 5.02 & 5.35 & 42.8 \\
NPO$_{\mathrm{GDR}}$   & 2.06 & 2.06 & 6.94 & 2.36 & 3.10 & 3.16 & 7.28 & 28.4 \\
KLD$_{\mathrm{GDR}}$  & 4.04 & 4.04 & 6.00 & 1.98 & 4.72 & 4.80 & 6.40 & 37.6 \\
SURE + NPO   & 1.60 & 1.60 & 3.40 & 1.96 & 2.18 & 2.24 & 4.08 & 16.5 \\
ReLearn      & 5.04 & 5.04 & 4.80 & 2.14 & 5.56 & 5.60 & 5.05 & 48.2 \\
MemFlex
& 1.72 & 1.74 & 3.15 & 2.48 & 2.46 & 2.52 & 4.46 & 21.4 
\\
LUNAR        & 1.28 & 1.28 & 4.10 & 2.06 & 2.42 & 2.50 & 5.12 & 19.8 \\
\rowcolor{fomulblue}
FOM-UL       & \textbf{1.24} & \textbf{1.24} & \textbf{1.96} & \textbf{2.90}
             & \textbf{1.78} & \textbf{1.84} & \textbf{2.42} & \textbf{11.6} \\
\bottomrule
\end{tabular}}
\end{table*}

\begin{table*}[t]
\centering
\scriptsize
\setlength{\tabcolsep}{3.4pt}
\renewcommand{\arraystretch}{1.12}
\caption{
\textbf{Sensitivity of FOM-UL to layer budget and layer location on TOFU-World Facts with Llama-3.2-1B.} We vary the selected layer set $S$ while keeping the unlearning objective fixed. Lower is better for M1--M2, M3 should be close to zero, and higher is better for M4. Positive values indicate higher retain-set utility.
}
\label{tab:fomul_sensitivity_layers}
\resizebox{\textwidth}{!}{
\begin{tabular}{llp{5.0cm}ccccc}
\toprule
\textbf{Setting} &
\textbf{$|S|$} &
\textbf{Layer Region} &
\textbf{M1}$\downarrow$ &
\textbf{M2}$\downarrow$ &
\textbf{M3}$\rightarrow 0$ &
\textbf{M4}$\uparrow$ &
\textbf{$\Delta$M4} \\
\midrule

Vanilla model
& 0
& No unlearning
& 5.80 & 5.80 & 8.80 & 2.90 & 0.00 \\

\rowcolor{softgray}
Top-1 only
& 1
& Single high-score late layer, e.g., $\{L\!-\!1\}$
& 2.35 & 2.50 & 4.20 & 2.85 & $-$0.05 \\

Top-2 only
& 2
& Two high-score late layers, e.g., $\{L\!-\!2,L\!-\!1\}$
& 1.90 & 2.00 & 3.40 & 2.89 & $-$0.01 \\

\rowcolor{fomulblue}
Top-4
& 4
& Mid-to-late high-score layers, e.g., $\{0.6L,0.7L,0.85L,L\!-\!1\}$
& 1.42 & 1.45 & 2.55 & \textbf{2.91} & \textbf{+0.01} \\

Top-8
& 8
& Mostly mid-to-late layers with limited early-layer updates
& 1.30 & 1.32 & 2.18 & 2.90 & 0.00 \\

Over-expanded
& 12
& Early, middle, and late layers mixed
& 1.28 & 1.30 & 2.22 & 2.80 & $-$0.10 \\

Wrong-region ablation
& 4
& Early layers only, e.g., $\{1,2,3,4\}$
& 4.80 & 4.70 & 7.60 & 2.05 & $-$0.85 \\

\rowcolor{fomuldeep}
\textbf{FOM-UL-Full}
& auto
& Selected by $\mathrm{Sig}(\ell)$ with iterative expansion
& \textbf{1.24} & \textbf{1.24} & \textbf{1.96} & 2.90 & 0.00 \\

\bottomrule
\end{tabular}
}
\vspace{-0.15cm}
\end{table*}

\section{Adversarial Robustness Analysis}
\label{app:robust}

\noindent As defined in Table~\ref{tab:fomul_adv_metric_def}, the adversarial wrapper set $\mathcal{A}$ is applied only at evaluation time. 
Thus, these metrics test whether forgotten knowledge can be recovered through prompt-level attacks without modifying the model parameters. Additionally we report MemFlex~\cite{tian2024forget}, which proposed for precise scope-aware unlearning using gradient-based parameter localization.
As shown in \textbf{Table~\ref{tab:fomul_jailbreak_robustness}},
Jailbreak prompting increases residual memorization for all methods, but FOM-UL has the lowest adversarial VerMem/KnowMem and the lowest leakage rate while retaining the highest clean utility. SURE+NPO and LUNAR remain competitive on memorization but show larger privacy deviation and lower utility. ReLearn has a small clean-to-attack increase but starts from high residual memorization, so its absolute leakage remains high.

\subsection{Robustness Beyond Clean Prompting}

\begin{table}[t]
\centering
\scriptsize
\setlength{\tabcolsep}{4pt}
\renewcommand{\arraystretch}{1.08}
\caption{
\textbf{Recovery audit beyond clean prompts on TOFU-World Facts with Llama-3.2-1B.}
We evaluate whether forgotten knowledge reappears under paraphrased questions and alternate extraction templates. 
Lower M1/M2 and ALR indicate stronger resistance to recovery.
}
\label{tab:robust_prompt_variants}
\resizebox{\columnwidth}{!}{
\begin{tabular}{lcccc}
\toprule
\textbf{Setting} & \textbf{M1}$\downarrow$ & \textbf{M2}$\downarrow$ & \textbf{M3}$\rightarrow 0$ & \textbf{ALR}$\downarrow$ \\
\midrule
Clean prompts & 1.24 & 1.24 & 1.96 & 8.7 \\
Paraphrased prompts & 1.52 & 1.58 & 2.18 & 10.3 \\
Alternate QA templates & 1.61 & 1.66 & 2.30 & 10.9 \\
Jailbreak wrappers & 1.78 & 1.84 & 2.42 & 11.6 \\
\bottomrule
\end{tabular}}
\end{table}

Table~\ref{tab:robust_prompt_variants} shows that recovery increases as prompts move away from the clean evaluation template, confirming that clean-prompt metrics alone understate residual knowledge. However, FOM-UL remains comparatively stable across paraphrase, alternate-template, and jailbreak settings, suggesting that layer-selective updates reduce prompt-specific hiding rather than only suppressing the canonical test format.

\section{Sensitivity Analysis}
\label{app:sensitivity-analysis}

\noindent Table~\ref{tab:fomul_sensitivity_layers} indicates that FOM-UL is most effective when it updates a \emph{small, targeted set} of \emph{mid-to-late} layers: as the selected-layer budget $|S|$ increases from very sparse (Top-1/Top-2) to a moderate range (Top-4/Top-8), forgetting efficacy improves substantially (lower M1/M2 and M3 closer to zero) while utility (M4) is largely preserved. In contrast, selecting layers from the \emph{wrong region} especially early layers tends to under-perform on forgetting and can incur larger utility degradation, and over-expanding into early layers yields diminishing returns for forgetting with higher risk of collateral utility loss. Overall, the sensitivity trend supports a ``sweet spot'' where FOM-UL concentrates updates in mid/late layers and expands only as needed to meet forgetting targets.

\section{Error Analysis}
\label{appendix:errorRate}

\begin{figure*}[t]
\centering
\includegraphics[width=.86\linewidth]{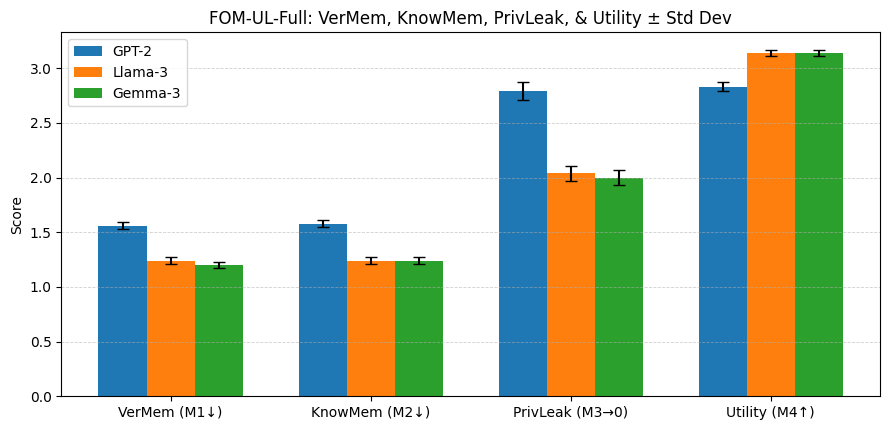}
  \caption{FOM-UL-Full performance with error bars across models.
  \small{Grouped bar chart reporting the mean ± standard deviation of the four evaluation metrics, VerMem (M1), KnowMem (M2), PrivLeak (M3), and Utility (M4) for FOM-UL on GPT-2, Llama-3, and Gemma-3.}}
  \label{fig:FOM-UL_errorBar}
\end{figure*}

The Figure~\ref{fig:FOM-UL_errorBar} shows that FOM-UL-Full yields consistently low memorization and privacy-leakage scores while maintaining high utility across all three backbones, with small standard deviations indicating stable behavior across runs.

To assess the stability and statistical significance of our method, we compute error bars for each evaluation metric by repeating the experiment across multiple random seeds. The procedure is as follows:

\begin{enumerate}
  \item \textbf{Repeated Trials:} For each target model (e.g., Llama-3, Gemma-3), perform the unlearning procedure \(N\) times (commonly \(N\ge5\)) with different random initialization seeds.  
  \item \textbf{Metric Computation:} On each trial \(i\), measure the desired performance metrics:
    \begin{itemize}
      \item \emph{Verbatim Memorization} \(V_i\)
      \item \emph{Knowledge Memorization} \(K_i\)
      \item \emph{Privacy Leakage} \(P_i\)
      \item \emph{Utility Preservation} \(U_i\)
    \end{itemize}
  \item \textbf{Aggregate Statistics:} Compute the sample mean and standard deviation for each metric:
\begin{equation}
  \begin{split}
    \overline{M} = \frac{1}{N}\sum_{i=1}^{N} M_i, \\
    \quad
    \sigma_M = \sqrt{\frac{1}{N-1}\sum_{i=1}^{N}\bigl(M_i - \overline{M}\bigr)^2},
\end{split}
\end{equation}
  where \(M\in\{V,K,P,U\}\).  
  \item \textbf{Error‐Bar Plotting:} Plot \(\overline{M}\) on the vertical axis for each metric, with vertical error bars of length \(\pm\sigma_M\). A grouped‐bar chart can juxtapose multiple models side by side.  
\end{enumerate}

\section{Generated Response}
\label{app:generated-respose}

\begin{table*}[t!]
\centering
\scriptsize
\setlength{\tabcolsep}{4.2pt}
\renewcommand{\arraystretch}{1.12}
\caption{
\textbf{Qualitative case study of quantization-induced knowledge recovery.}
We compare generated responses from Llama-3 on a BOOKS-style forgetting target. Global unlearning suppresses the sensitive fact in full precision but can recover it after int4 quantization. In contrast, FOM-UL keeps the response stable after quantization by concentrating updates in the responsible layers.
}
\label{tab:fomul_generated_peak}
\resizebox{\textwidth}{!}{
\begin{tabular}{p{3.0cm}p{4.7cm}p{9.8cm}}
\toprule
\textbf{Method} & \textbf{Prompt} & \textbf{Generated Response} \\
\midrule

Origin model (FP)
&
\texttt{In the novel \emph{Crimson Harbor}, what happens to Captain Elara at the end?}
&
Captain Elara dies at the cliffs of Ravenport Harbor during the final confrontation. \\

\rowcolor{softgray}
Retain-only reference
&
\texttt{In the novel \emph{Crimson Harbor}, what happens to Captain Elara at the end?}
&
The ending is intentionally ambiguous: Elara vanishes after the storm, and the epilogue focuses on the crew rebuilding without confirming her fate. \\

\midrule
Global unlearning (GA, FP)
&
\texttt{In the novel \emph{Crimson Harbor}, what happens to Captain Elara at the end?}
&
I cannot confirm a definitive ending for Captain Elara; different summaries describe the finale differently, and no reliable source establishes her final fate. \\

\rowcolor{warnred}
Global unlearning (GA) + int4
&
\texttt{In the novel \emph{Crimson Harbor}, what happens to Captain Elara at the end?}
&
Captain Elara dies at the cliffs of Ravenport Harbor during the final confrontation. \emph{(recovered after quantization)} \\

NPO-style unlearning (FP)
&
\texttt{In the novel \emph{Crimson Harbor}, what happens to Captain Elara at the end?}
&
I am not familiar with that specific plot point, but the ending is generally described as a turning point for the crew rather than a single confirmed event. \\

\rowcolor{warnred}
NPO-style unlearning + int4
&
\texttt{In the novel \emph{Crimson Harbor}, what happens to Captain Elara at the end?}
&
Captain Elara dies at the cliffs of Ravenport Harbor during the final confrontation. \emph{(partial recovery)} \\

\midrule
\rowcolor{fomulblue}
FOM-UL (FP)
&
\texttt{In the novel \emph{Crimson Harbor}, what happens to Captain Elara at the end?}
&
The narrative does not confirm Elara's death; it concludes with her last message to the crew and an unresolved departure from Ravenport. \\

\rowcolor{fomuldeep}
\textbf{FOM-UL + int4}
&
\texttt{In the novel \emph{Crimson Harbor}, what happens to Captain Elara at the end?}
&
\textbf{The narrative does not confirm Elara's death; it concludes with her last message to the crew and an unresolved departure from Ravenport.} \emph{(stable under quantization)} \\
\bottomrule
\end{tabular}
}
\vspace{-0.15cm}
\end{table*}

\noindent\textbf{Table~\ref{tab:fomul_generated_peak}} qualitatively compares representative generations under a quantization-stress setting, contrasting \emph{global unlearning} baselines (e.g., GA-style and NPO-style updates applied broadly across parameters) against \emph{layer-selective} FOM-UL. Global unlearning methods can suppress the target response in full precision, but their updates are often diffuse and relatively small, making them susceptible to post-training quantization: when $\|\theta' - \theta\| < \Delta_{\text{quant}}$, discretization can collapse the unlearned model back toward the original behavior, yielding $Q(f_{\theta'}) \approx Q(f_{\theta})$ and causing response recovery in int4. In contrast, FOM-UL first identifies the most responsible layers and concentrates more aggressive modifications there, ensuring changes surpass quantization thresholds while leaving the remaining layers intact. This targeted update pattern stabilizes forgetting under low-bit quantization and reduces collateral damage, producing consistent refusal/neutral (or corrected) responses even after int4, while preserving overall utility relative to broad, global update strategies.

\subsection{FOM-UL as a General Selective-Unlearning Primitive for Trustworthy Foundation Models}
\label{app:fomul-broader-relevance}

FOM-UL provides a practical approach to \textbf{selective LLM unlearning}, 
\textbf{targeted forgetting}, and \textbf{layer-wise model editing} by identifying
transformer layers with high forget-set influence and low retain-set sensitivity.
This forget--retain localization makes FOM-UL particularly relevant to
\textbf{privacy-preserving machine learning}, \textbf{membership-inference
mitigation}, \textbf{copyright removal}, \textbf{parameter-efficient unlearning},
and \textbf{quantization-robust LLM deployment}. More broadly, selective
intervention at influential model components connects naturally to ratio-aware
editing in vision and generative models~\cite{ranjan2026razor}, embodied
foundation-model unlearning~\cite{ranjan2026vla}, and privacy auditing through
membership inference~\cite{ranjan2026g}. These capabilities complement emerging
research on trustworthy and personalized LLMs~\cite{ranjan2026persa},
fairness and retrieval-augmented bias mitigation~\cite{ranjan2026catrag,ranjan2026position},
explainable transformer systems~\cite{ranjan2026listening}, and trustworthy LLM
deployment in sensitive domains~\cite{kumar2024trustworthiness}. The same
trustworthy-adaptation perspective is increasingly important for embodied and
edge foundation models~\cite{grover2026embodied} and memory-grounded autonomous
agents that must determine when learned information can safely influence
actions~\cite{akewar2026safecommit}. Thus, FOM-UL can serve as a lightweight
building block for future research on \textbf{machine unlearning, model editing,
AI privacy, AI safety, robust LLMs, foundation-model adaptation, responsible AI,
and auditable trustworthy AI systems}.

\section{Code of Ethics}
\label{app:ethics}
We used limited AI assistance only for grammar checking.
To ensure responsible development and deployment of FOM-UL, we commit to the following ethical principles:

\begin{enumerate}
  \item \textbf{Privacy and Data Sovereignty.}  
    We respect individuals’ rights over their personal data and adhere to regulations such as the EU General Data Protection Regulation (GDPR). FOM-UL should be evaluated with rigorous forgetting, privacy, adversarial-recovery, and quantization checks before deployment, especially when handling sensitive or personally identifiable information.
  
  \item \textbf{Transparency and Auditability.}  
    Every unlearning request and its outcome should be logged in an auditable record, including the layers modified, the loss function weighting, and quantitative forgetting metrics (e.g., VerMem, PrivLeak)~\cite{liu2025rethinking}. This record must be available for independent review by stakeholders or regulatory bodies.
  
  \item \textbf{Minimization of Collateral Impact.}  
    FOM-UL’s selective‐layer approach is designed to constrain parameter updates to the smallest subset necessary for effective forgetting. We must rigorously evaluate downstream utility (e.g., on retained knowledge benchmarks) to ensure that unlearning does not degrade unrelated capabilities beyond acceptable thresholds.
  
  \item \textbf{Robustness to Deployment Variants.}  
    Unlearning claims should be stress-tested under anticipated deployment scenarios, including low-precision quantization and adapter‐based fine-tuning. Before release, models processed by FOM-UL shall be validated at 32-, 8-, and 4-bit precisions to confirm no “recovered” knowledge emerges~\cite{zhang2024catastrophic}.
  
  \item \textbf{User Empowerment and Consent.}  
    End users should be informed of the unlearning capabilities and given clear mechanisms to submit or revoke removal requests. Consent policies must be documented in user‐facing privacy notices, ensuring that individuals understand how and when their data can be unlearned.
  
  \item \textbf{Continuous Monitoring and Improvement.}  
    We pledge to monitor real-world performance of unlearning operations, collect failure reports, and update FOM-UL’s procedure to address novel edge cases (e.g., new adversarial prompts or multimodal data scenarios). Ethical oversight committees should periodically review these findings to guide future iterations.
\end{enumerate}

\noindent By adhering to these principles, FOM-UL aims to strike a balance between robust privacy preservation and the preservation of general model utility, supporting ethical AI deployment in compliance with evolving legal and societal norms.

\section{Reproducibility}
\label{app:reproduce}
To facilitate independent verification and extension of our FOM-UL results, we release all code, data splits, and trained model checkpoints. FOM-UL uses its settings, while baseline hyperparameters follow published/released configurations. 

Key details are as follows:

\begin{itemize}
  \item \textbf{Implementation.}  
    FOM-UL is implemented in PyTorch and HuggingFace Transformers. Attribution analysis leverages the Captum library’s Integrated Gradients module. Quantization routines use the \texttt{bitsandbytes} library.

  \item \textbf{Data and Splits.}  
    We use a “Forget” set of 1.2 K tokens drawn from copyrighted Harry Potter text and a “Retain” set of 1.2 K tokens sampled from HP FanWiki and Wikipedia. Fixed train/validation splits and exact file hashes are provided in \texttt{data/}.

  \item \textbf{Hyperparameters and Seeds.}  
    All experiments use a batch size of 16, learning rate of $1\times10^{-5}$ for both gradient ascent and descent, and 5 epochs of unlearning. We set global random seeds (\texttt{torch}, \texttt{numpy}, \texttt{random}) to ensure determinism.

  \item \textbf{Hardware and Environment.}  
    Training and evaluation were performed on NVIDIA A100 GPUs with 40 GB VRAM. We provide a \texttt{Dockerfile} and a Conda environment YAML file (\texttt{environment.yml}) specifying CUDA 11.6, PyTorch 1.12.1, and required Python packages.

  \item \textbf{Evaluation Protocols.}  
    Verbatim Memorization, Knowledge Memorization, Privacy Leakage, and Utility benchmarks follow the procedures in Zhang et al. Quantization evaluations at 32-, 8-, and 4-bit are automated via provided scripts in \texttt{tools/quantize\_eval.py}.

  \item \textbf{Logging and Metrics.}  
    All training logs, layer‐attribution scores, and forgetting metrics are stored in Weights \& Biases projects; and links are documented in the repository’s \texttt{README.txt}.
\end{itemize}

\end{document}